\title{A Little Depth Goes a Long Way:\\ The Expressive Power of Log-Depth Transformers}
\author{
  William Merrill\thanks{Work partially conducted as a PhD student at New York University.}\\
  Allen Institute for AI\\
  \texttt{willm@allenai.org} \\
  \And
  Ashish Sabharwal \\
  Allen Institute for AI \\
  \texttt{ashishs@allenai.org} \\
}
\newtheorem{theorem}{Theorem}
\newtheorem{lemma}{Lemma}
\newtheorem{corollary}{Corollary}[theorem]
\theoremstyle{definition}
\newtheorem{definition}{Definition}
\newcommand{\draftcomment}[3]{{\textcolor{#3}{[#2: \emph{#1}]}}}
\newcommand{\upd}[1]{{\color{blue} #1}}
\newcommand{\CR}[1]{{\color{blue} #1}}
\renewcommand{\draftcomment}[3]{}  
\renewcommand{\upd}[1]{#1}  
\renewcommand{\CR}[1]{#1}  
\DeclarePairedDelimiter\abs{\lvert}{\rvert}%
\DeclarePairedDelimiter\norm{\lVert}{\rVert}%
\newcommand\NC{\mathsf{NC}}
\newcommand\AC{\mathsf{AC}}
\newcommand\TC{\mathsf{TC}}
\renewcommand\L{\mathsf{L}}
\renewcommand\P{\mathsf{P}}
\renewcommand{\O}{\mathrm{O}}
\def\ceil#1{\lceil #1 \rceil}
\def\floor#1{\lfloor #1 \rfloor}
\begin{document}

\maketitle

\begin{abstract}
    Recent theoretical results show transformers cannot express sequential reasoning problems over long inputs, intuitively because their computational \emph{depth} is bounded. However, prior work treats the depth as a constant, leaving it unclear to what degree bounded depth may suffice for solving problems over short inputs, or how increasing the transformer's depth affects its expressive power. We address these questions by analyzing  transformers whose depth can grow minimally with context length $n$. We show even highly uniform transformers with depth $\Theta(\log n)$ can express two important problems: \emph{recognizing regular languages}, which captures state tracking abilities and was known to be expressible only by an unconventional, non-uniform model of transformers, and \emph{graph connectivity}, which underlies multi-step reasoning. Notably, both of these problems cannot be expressed by fixed-depth transformers under standard complexity conjectures, demonstrating the expressivity benefit of growing depth. Moreover, our theory quantitatively predicts how depth must grow with input length to express these problems, showing that depth scaling is more efficient than scaling width or chain-of-thought steps. Empirically, our detailed experiments designed to bridge the expressivity vs.\ learnability gap reveal that our theoretical depth requirements for regular language recognition closely match the practical depth requirements for successfully training transformers. Thus, our results clarify how depth affects a transformer's reasoning capabilities, and provide practical guidance for effective depth selection for sequential reasoning.
\end{abstract}

\section{Introduction}

A line of recent work analyzing the intrinsic computational power of transformers, the neural architecture behind today's immensely successful large language models (LLMs), has established that, with fixed depth, transformers cannot express many simple problems outside the complexity class $\TC^0$, including recognizing regular languages and resolving connectivity of nodes in a graph \citep{merrill-sabharwal-2023-parallelism,chiang2023tighter}.
These problems conceivably underlie many natural forms of sequential reasoning, such as state tracking \citep{liu2023transformers,merrill2024illusion} and resolving logical inferences across long chains \citep{wei2022chain}.
Thus, these results suggest inherent limitations on the types of reasoning transformer classifiers can perform.
Yet, these findings come with an important caveat: even if transformers cannot solve such problems exactly for inputs of arbitrary lengths, they may still be able to solve them over inputs \emph{up to some bounded length}.
This perspective, coupled with the fact that \emph{treating depth as fixed} is crucial to prior analyses placing transformers in $\TC^0$, motivates two related questions about depth as an important resource for a transformer, in relation to
the context length over which it reasons:
\begin{enumerate}[topsep=1pt,itemsep=1pt]
    \item \textbf{Bounded Context:} If fixed-depth transformers cannot theoretically express certain problems over unbounded context lengths, can they still express them over bounded but still practically ``large enough'' contexts? Can we quantitatively characterize the context length up to which transformers are effective for different problems as a function of their depth?
    
    \item \textbf{Dynamic Depth:} Can minimally scaling the depth of a transformer allow it to solve such problems for arbitrarily long inputs? How does this compare in efficiency to scaling width \upd{(i.e., model dimension)} or scaling inference-time compute via chain-of-thought steps? 
\end{enumerate}

We address these questions by analyzing the expressive power of ``universal'' transformers (also called ``looped'' transformers) where a fixed model is given dynamic depth by repeating a block of middle layers a variable number of times \citep{dehghani2018universal,yang2024looped,geiping2025depthscaling}.
We capture the regime where depth grows minimally with context length, with the middle layers repeated $\Theta(\log n)$ times on contexts of length $n$.
We prove that \upd{even such highly uniform transformers, when allowed log-depth,} can recognize regular languages\footnote{\upd{While \citet{liu2023transformers} provided a similar result, their construction relied heavily on an unconventional, \emph{non-uniform} transformer architecture, requiring a different set of model weights for each input length. In contrast, our result holds in a stronger setting of highly uniform transformers---where model weights are not only fixed (independent of input length) as in practice but even shared across blocks of layers, enabling effective inference-time scaling. Our formal model incorporates standard architectural choices like residual connection and layer norm. We supplement our stronger theoretical findings with matching empirical learnability results.}} and solve graph connectivity, two important reasoning problems known to be beyond fixed-depth transformers \citep{merrill-sabharwal-2023-parallelism}. \upd{Our core technical contribution enabling this is \cref{lem:division}, that fully uniform transformers can compute \emph{division} and \emph{remainder} of small integers. This not only obviates the need for non-uniformity and special positional encodings relied upon in prior work, it is also an interesting finding on its own.}

\begin{wrapfigure}{r}{0.5\linewidth}
    \vspace{-2ex}
    \centering
    \includegraphics[trim=10pt 0 25pt 25pt,clip,width=0.93\linewidth]{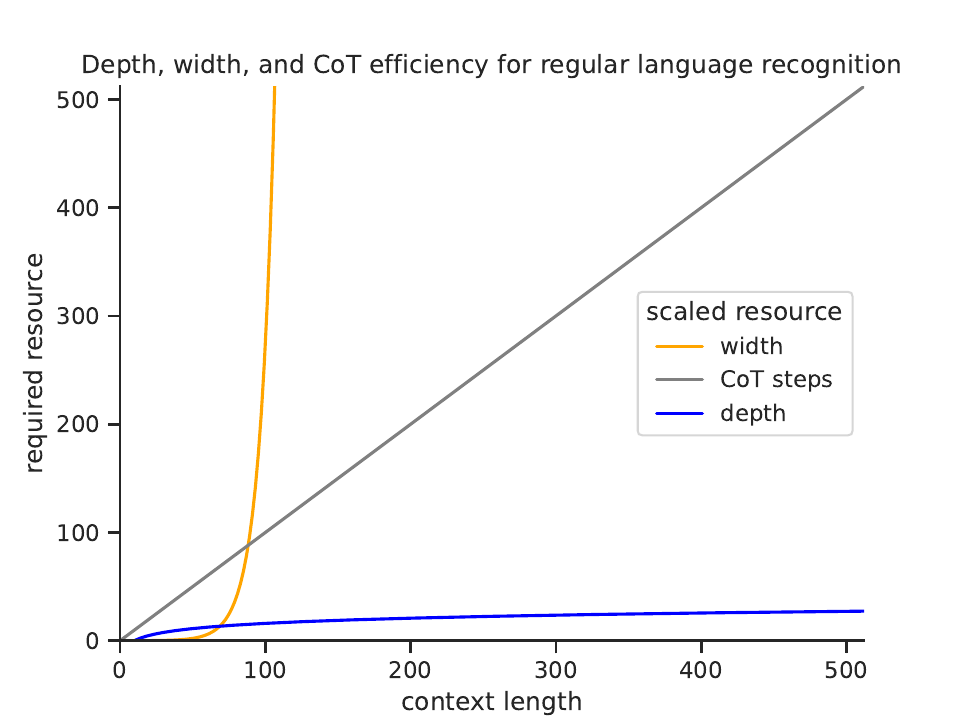}
    \caption{To recognize a regular language over inputs of length $n$, the depth of a universal transformer can grow $\Theta(\log n)$ by \Cref{thm:reg-langs}. On the other hand, width must grow \emph{superpolynomially} (\Cref{thm:polyn-width}), and the number of chain-of-thought steps must be \emph{superlogarithmic} (\Cref{thm:cot-tc0}). The precise depth and width coefficients plotted here were \textbf{obtained experimentally} in \Cref{sec:experiments}.
    }
    \label{fig:fig1}
    \vspace{-2.5ex}
\end{wrapfigure}

Our result has two interesting interpretations.
First, it directly shows that, by \textbf{dynamically increasing their depth} as $\Theta(\log n)$ on inputs of length $n$, one can construct transformers to solve regular language recognition (\cref{thm:reg-langs}) and graph connectivity (\cref{thm:graph-connectivity}) for arbitrary context length.
In contrast, chain-of-thought (CoT) steps, used for additional test-time compute by newest LLMs such as OpenAI o1 \citep{o1-short} and DeepSeek-R1 \citep{deepseekr1-short}, must be scaled superlogarithmically (\cref{thm:cot-tc0}) to solve these problems, and width must be scaled superpolynomially (\cref{thm:polyn-width}), as shown in \Cref{fig:fig1}.
Thus \textbf{scaling depth} more efficiently allows solving these reasoning problems compared to scaling width or using CoT.

Second, a universal transformer unrolled to a fixed (independent of input size) depth $d$ is a special case of a standard $d$-depth transformer, namely one with a highly uniform structure (parameters shared across layers).
Thus, our result shows that standard transformers with a \emph{fixed depth} $d$ \underline{can} recognize regular languages (\cref{cor:reg-langs-nonuniform-fixed-depth}) and  solve graph connectivity problems (\cref{cor:graph-connectivity-fixed-depth}) as long as one cares only about \emph{bounded inputs} of size $2^{O(d)}$. This allows us to quantify how many layers are necessary for a desired input size. For instance, \Cref{cor:reg-langs-nonuniform-fixed-depth,cor:graph-connectivity-fixed-depth} imply that
with a depth of only $32$, such as in LLaMA 3.1 7B \citep{llama3-short} and OLMo 7B \citep{olmo-short}, transformers \underline{can} recognize regular languages up to strings of length $107$ and solve connectivity for graphs with up to $128$ vertices.
A depth of $80$ (such as in LLaMA 3.1 70B) makes these input size limits practically unbounded: strings of length up to $440$K and graphs with up to $2.1$B vertices. \textbf{Empirically}, our experiments \upd{supplementing our theoretical findings} \upd{demonstrate} that scaling depth as $\Theta(\log n)$ is necessary and sufficient for \upd{learning to recognize} hard regular languages.\footnote{Code: \url{https://github.com/jopetty/word-problem/tree/willm/log-depth/log-depth}.}
    


We hope these findings serve as actionable guidance for practitioners to choose effective model depths for reasoning over long contexts, and motivate further exploration of dynamic depth as an inference-time compute strategy for transformer based LLMs.

\section{Preliminaries: Universal Transformers}\label{sec:universal-transformers}

We consider \textbf{$(s,r,t)$-universal transformers}, which are defined to have $s$ fixed initial layers at the start, a sequence of $r$ layers that is repeated some number of times based on the input length, and a sequence of $t$ fixed final/terminal layers. Thus, an $(s,r,t)$-universal transformer unrolled $d(n)$ times for input length $n$ has a total of $s + r d(n) + t$ layers.
\upd{\citet{geiping2025depthscaling} empirically explored such transformers for scaling test-time computation for reasoning problems.}
A standard $d$-layer transformer is $(d,0,0)$-universal (equivalently, $(0,0,d)$-universal), while a standard universal transformer \citep{dehghani2018universal,yang2024looped} is $(0,1,0)$-universal.

\begin{definition} \label{def:transformer}
A decoder-only $(s,r,t)$-universal transformer with $h$ heads, $d$ layers, model dimension $m$ (divisible by $h$), and feedforward width $w$ is specified by:
\begin{compactenum}
    \item An embedding projection matrix $\mathbf E : \Sigma \to \mathbb Q^m$ and positional encoding function $\pi : \mathbb N \to \mathbb Q^m$, which we assume separates $1$ from other indices \citep{merrill-sabharwal-2024-cot};
    \item A list of $s$ ``initial'' transformer layers (defined under ``Transformer Sublayers'' below); 
    \item A list of $r$ ``repeated'' transformer layers;
    \item A list of $t$ ``final'' transformer layers;
    \item An unembedding projection matrix $\mathbf U$ that maps vectors in $\mathbb Q^m$ to logits in $\mathbb Q^{\abs{\Sigma}}$.
\end{compactenum}
\end{definition}

We next define how the transformer maps a sequence $w_1\cdots w_n \in \Sigma^n$ to an output value $y \in \Sigma$; to do so, we will always specify that the transformer is \textbf{unrolled} to a specific depth function $d(n)$, which we will consider to be $d(n) = \ceil{\log n}$.\footnote{Following computer science conventions, we let $\log n \triangleq \log_2 n$.}
The computation is inductively defined by the \textbf{residual stream} $\mathbf h_i$: a cumulative sum of all layer outputs at each token $i$.
In the base case, the residual stream $\mathbf h_i$ is initialized to $\mathbf h^0_i = \mathbf E(w_i) + \pi(i)$.
We then iteratively compute $s + r d(n) + t$ more layers, deciding which layer to use at each step as follows:
\begin{equation*}
    L^\ell =
    \begin{cases}
        s\textrm{-layer} \; \ell & \textrm{if} \; 1 < \ell \leq s \\
        r\textrm{-layer} \; ((\ell - s - 1) \bmod r) + 1 & \textrm{if} \; s < \ell \leq s + r d(n) \\
        t\textrm{-layer} \; \ell - s - r d(n) & \textrm{otherwise.}
    \end{cases}
\end{equation*}
We then compute $\mathbf h^\ell_1, \ldots, \mathbf h^\ell_n = L^\ell(\mathbf h^{\ell-1}_1, \ldots, \mathbf h^{\ell-1}_n)$.
The transformer output is a token determined by first computing the logits $\mathbf h^{\ell^*}_n \mathbf U$, where $\ell^* = s + rd(n) + t$, and then selecting the token with maximum score.
We can identify a special token in $\Sigma$ with ``accept'' and say that a transformer \textbf{recognizes} language $L$ if, for every $w \in \Sigma^*$, it outputs ``accept'' if and only if $w \in L$.

An $(s,r,t)$-transformer unrolled to some fixed depth can be viewed as a ``uniform'' special case of a fixed-depth transformer. Thus, constructions of dynamic-depth transformers (depth $d(n)$ for inputs of length $n$) imply that, given any bounded context length $N$, there also exists a fixed-depth transformer with depth $d(N)$ for the task at hand. The fact that this can be done with a looped transformer with dynamic depth is, in fact, a stronger condition that shows the construction is uniform, which is formally important as non-uniform models of computation can have very strong and unrealistic power \citep[cf.][]{merrill2022SatAttnTC0}. In this way, our results about looped transformers will provide insights about standard, non-looped transformers with bounded context lengths.

\paragraph{Transformer Sublayers.}
To make \Cref{def:transformer} well-defined, we will next describe the structure of the self-attention and feedforward sublayers that make up the structure of each transformer layer.
Our definition of the transformer will have two minor differences from practice:
\begin{compactenum}
    \item \textbf{Averaging-hard attention} (a.k.a., saturated attention): attention weight is split uniformly across the tokens with maximum attention scores.
    \item \textbf{Masked pre-norm}: We assume standard pre-norm \citep{xiong2020on} but add a learned mask vector \CR{$\mathbf m \in \mathbb R^m$} that can select specific dimensions of the residual stream per sublayer.
\end{compactenum}

\CR{Under masked pre-norm,} the sublayer input will be read as sequence of normalized residual stream values $\mathbf z_i = \mathsf{layer\_norm}(\mathbf m \odot \mathbf h_i)$,
where \CR{$\odot$ is elementwise product and}
layer-norm can be standard layer-norm \citep{ba2016layernormalization} or RMS norm \citep{zhang2019rms}.
The sublayer then maps $\mathbf z_1, \ldots, \mathbf z_n$ to a sequence $\delta_1, \ldots, \delta_n$, and updates the residual stream as $\mathbf h'_i = \mathbf h_i + \delta_i$.

\begin{definition}[Self-attention sublayer]
    The self-attention sublayer is parameterized by a mask $\mathbf m \in \mathbb Q^m$, output projection matrix $\mathbf W : \mathbb Q^m \to \mathbb Q^m$, and, for $1 \leq k \leq h$, query, key, and value matrices $\mathbf Q^k, \mathbf K^k, \mathbf V^k$, each of which is a projection from $\mathbb Q^m$ to $\mathbb Q^{m / h}$.
\end{definition}

Given input $\mathbf z_i$, the self-attention sublayer computes queries $\mathbf q_i = \mathbf Q^k \mathbf z_i$, 
keys $\mathbf k_i = \mathbf K^k \mathbf z_i$, 
and values $\mathbf v_i = \mathbf V^k \mathbf z_i $.
Next, these values are used to compute the attention head outputs:
\begin{equation*}
    \mathbf a_{i,k} = \lim_{\tau \to 0} \sum_{j=1}^{c} \frac{\exp(1/\tau \cdot \mathbf q_{i,k}^\top \mathbf k_{j,k})}{Z_{i,k}} \cdot \mathbf v_{j,k} ,
    \quad \textrm{where} \;
    Z_{i,k} = \sum_{j=1}^{c} \exp \left(1/\tau \cdot \mathbf q_{i,k}^\top \mathbf k_{j,k} \right)
\end{equation*}
and $c = i$ for causal attention and $c = n$ for unmasked attention.
The $\tau \to 0$ limit implements averaging-hard attention: all probability mass is concentrated on the indices $j$ for which the attention score is maximized.
This idealization is similar to assuming the temperature of the attention is large relative to the sequence length $n$.
Finally, the attention heads are aggregated to create an output to the residual stream $\delta_i = \mathbf W \cdot \mathrm{concat}(\mathbf a_{i,1}, \ldots, \mathbf a_{i,h})$.
    
\begin{definition}[Feedforward sublayer]
    The feedforward sublayer at layer $\ell$ is parameterized by a mask $\mathbf m \in \mathbb Q^m$ and projections $\mathbf W: \mathbb Q^m \to \mathbb Q^w$ and $\mathbf U : \mathbb Q^w \to \mathbb Q^m$.
\end{definition}

A feedforward layer computes a local update to the residual stream via $\delta_i = \mathbf U \cdot \mathsf{ReLU}(\mathbf W \mathbf z_i)$.

\CR{\paragraph{Positional Encodings.} We will assume no positional encodings \citep{kazemnejad2023nope}, but that there is a beginning of sequence (BoS) symbol at the start. As described by \citet{merrill-sabharwal-2024-cot}, our constructions will generalize to any positional encoding as long as either BoS is provided or the first token's positional encoding is linearly separable from all other positional encodings.}

\subsection{Memory Management in Universal Transformers}
\label{subsec:memory-management}

A technical challenge when working with universal transformers that add values to the residual stream is that if one is not careful, outputs from the previous iteration of a layer may interfere with its computation at a later iteration. This necessitates ``memory management'' of individual cells in which the transformer stores values. In particular, any intermediate values stored by a layer must be ``reset'' to 0 and any desired output values must be correctly updated after use in subsequent layers.

\Cref{sec:building-blocks} discusses in detail how $\{-1, 0, 1\}$ values can be stored directly in the residual stream, while a general scalar $z$ can be stored either as $\psi(z) = \langle z, 1, -z, -1 \rangle$ in its \emph{unnormalized form} or as the unit vector $\phi(z) = 1/\sqrt{2} \cdot \psi(z) / \sqrt{z^2 + 1}$ in its \emph{normalized form} \citep[``layer-norm hash''; cf.][]{merrill-sabharwal-2024-cot}. Importantly, however $z$ is stored, when it is read using masked pre-norm, we obtain $\phi(z)$.
In \Cref{sec:building-blocks}, we show how numerical values represented using $\psi$ or $\phi$ can be easily written (\Cref{lem:write-z}), read (\Cref{lem:read-phi}), and deleted (\Cref{lem:remove,lem:write-delete}) from the residual stream.
We will leverage these operations heavily in our theoretical constructions.

\subsection{Numerical Datatype and Precision}

\CR{Our constructions will involve working with scalars used as \emph{pointers} to token positions, which will be stored in and retrieved from the residual stream (as discussed in \Cref{subsec:memory-management}). We thus need $\Omega(\log n)$ bits of precision.
We now formalize the underlying datatype we use for our constructions.

We assume scalars are encoded as strings in $\{0, 1\}^p$, with $p = c \log n$ for some fixed $c > 0$.
Note that model parameters in our fully uniform setting cannot depend on $n$, but activations can.
We assume there is some \emph{datatype} $\mathbb D_p$ that assigns a numerical semantics for each string in $\{0, 1\}^p$.
For $x \in \mathbb R$, let $[x]_{\mathbb D_p}$ be $x$ rounded into $\mathbb D_p$, i.e., the bitstring whose numerical value in $\mathbb D_p$ is closest to $x$ (breaking ties in favor of the higher value).

Our constructions will be agnostic to the underlying details of $\mathbb D_p$.
Instead, we will minimally assume that, for some fixed $c$, all arithmetic operations in the transformer computation graph (addition, multiplication, division, $\exp$, and layer-norm) are $p$-precise for $p \geq c \log n$ in the following sense:

\begin{definition}[$p$-Precise Operations] \label{def:precise}
    Let $f: \mathbb R^k \to \mathbb R$ be
    an operation
    with $p$-precision realization $\tilde f: \mathbb D_p^k \to \mathbb D_p$.
    We say $\tilde f$ is $p$-precise if, for any $x_1, \ldots, x_k \in \mathbb R$ exactly representable in $\mathbb D_p$,
    \begin{equation*}
        [f(x_1, \ldots, x_k)]_{\mathbb D_p} = \tilde f([x_1]_{\mathbb D_p}, \ldots, [x_k]_{\mathbb D_p}) .
    \end{equation*}
\end{definition}


To apply this definition, we view the summation in attention heads as an $n$-ary operation.
We also view layer-norm as a single operation from $\mathbb R^m \to \mathbb R^m$.

\Cref{def:precise} is naturally satisfied by the log-precision transformer model formalized by \citet[Section 2.2 and Appendix A]{merrill2024illusion} and used in earlier work~\citep{merrill-sabharwal-2023-parallelism,merrill2023logic}, as long as sufficient precision is used internally to compute attention and layer-norm precisely.
It is an open question how much precision is required by the internal primitive operations of summation and layer-norm to guarantee their output is $(c \log n)$-precise.
We intentionally abstract away these low-level details here, especially because, in practice, additional precision is typically allocated in any case for attention and layer-norm computation \citep{micikevicius2018mixed,olmo-short}.

Finally, we briefly note how \Cref{def:precise} relates to other datatype models.
\citet{chiang2023tighter} propose a polynomial-precision rational datatype, which satisfies \Cref{def:precise} because the first $c \log n$ bits (and potentially more) are correct.
In contrast, finite-precision transformers do not satisfy \Cref{def:precise} because only $\O(1)$ bits are correct.
In particular, while the mixed-precision model of \citet{yang2025kneedeep} can precisely represent attention, it cannot store $(c \log n)$-bit values in the residual stream or as the output of layer-norm. It therefore does not satisfy \Cref{def:precise} or suffice for our constructions.
}

\section{Fixed Depth Transformers Can Divide Small Integers}


A useful primitive for coordinating information routing in a log-depth transformer will be dividing integers and computing remainders.
We therefore start by proving that transformers can perform integer division for small numbers, which will be a useful tool for our main results. Specifically, we show that given a non-negative integer $a_i$ no larger than the current position $i$, one can compute and store the (normalized) quotient and remainder when $a_i$ is divided by an integer $m$. This effectively means transformers can perform arithmetic modulo $m$ for small integers.

\begin{lemma}[Division] \label{lem:division}
    Let $a_i, b_i, c_i, m \in \mathbb Z^{\geq 0}$ be such that $a_i = b_i m + c_i$ where
    $a_i \leq i$ and $c_i < m$. Suppose $\psi(i)$, $\psi(m)$, and $\phi(a_i)$ (or $\psi(a_i)$)
    are present in the residual stream of a transformer at each token $i$.
    Then, there exists \CR{a block of 7 transformer layers} with causally masked attention and masked pre-norm that, on any input sequence, adds $\phi(b_i)$ and $\phi(c_i)$ to the residual stream at each token $i$.
\end{lemma}

\begin{proof}
    The overall idea is as follows. In the first layer, each position $i$ outputs an indicator of whether it's a multiple of $m$. It also adds $\phi(j)$ to the residual stream such that $j$ is the quotient $i/m$ if $i$ is a multiple of $m$. In the second layer, each position $i$ attends to the nearest position $j \leq i$ that is a multiple of $m$ and retrieves the (normalized) quotient stored there, which is $j/m = \floor{i/m}$. It adds this (normalized) quotient in its own residual stream.
    We then use \Cref{lem:offset} (\S\ref{sec:computing-position-offsets}) to construct a third layer that adds $\phi(i - 1)$ and $\phi(i - 2)$ to the residual stream.
    A fourth layer checks in parallel whether the quotient stored at $i$ matches the quotients stored at $i - 1$ and $i - 2$, respectively.
    In the fifth layer, position $i$ counts the number of positions storing the same quotient as $i$, excluding the first such position.
    Finally, in the sixth layer, position $i$ attends to position $a_i$ to compute and add to the residual stream $\phi(\floor{a_i / m})$ (which is $\phi(b_i)$) and $\phi(a_i - m \floor{a_i / m})$ (which is $\phi(c_i)$). We next describe a detailed implementation of the construction, followed by an argument of its correctness.

    \underline{Construction}. The \underline{first} layer uses the following attention head.
    The query at position $i$ is $q_i = \phi(i, m) = \phi(i/m)$ computed via \Cref{lem:storage-pair} (\S\ref{sec:storage-interface}) leveraging the assumption that $\psi(i)$ and $\psi(m)$ are present in the residual stream.
    The key and value at position $j$ are $k_j = v_j = \phi(j)$
    Let $h^1_i = \phi(j)$ denote the head's output. The feedforward sublayer computes $e_i = \mathbb{I}(h^1_i = \phi(i/m))$ 
    using \Cref{lem:equality-check} (scalar equality check, \S\ref{sec:equality-check}) on the first coordinate of $h^1_i$ and $\phi(i/m)$. 
    By \Cref{lem:div-layer1} (\S\ref{sec:division-correctness}),
    $e_i = 1$ if and only if $i$ is a multiple of $m$ and, if $e_i = 1$, then $h^1_i = \phi(i/m)$, i.e., it represents the quotient $i/m$.
    We store $h^1_i = \phi(i/m)$ and $e_i$ to the residual stream.\footnote{As described in \S\ref{lem:equality-check}, a component will be added to the second layer to reset intermediate memory cells used in the first layer to 0 (this will happen analogously in later layers, but we will omit mentioning it).}

    The \underline{second} layer uses a head that attends with query $q_i = \langle 1, 1 \rangle$, key $k_j = \langle e_j, [\phi(j)]_0 \rangle$, and value $v_j = h^1_j$; both $e_j$ and $h^1_j$ can be read from the residual stream using masked pre-norm. This head attends to all positions $j \leq i$ that are multiples of $m$ (where $e_j = 1$), with $[\phi(j)]_0$, the first component of $\phi(j)$, serving as a tie-breaking term for breaking ties in favor of the \emph{nearest} multiple of $m$.
    \Cref{lem:div-layer2} (\S\ref{sec:division-correctness}) shows this head outputs $h^2_i = \phi(\floor{i/m})$, which we store in the residual stream.
 
    The \underline{third} layer uses \Cref{lem:offset} (\S\ref{sec:computing-position-offsets}) to add $\phi(i - 1)$ and $\phi(i - 2)$ to the residual stream at position $i$.

    In parallel for $k \in \{1, 2\}$, the \underline{fourth} layer attends with query $q_i = \phi(i - k)$, key $k_j = \phi(j)$, and value $v_j = \phi(\floor{j/m})$ to retrieve the quotient stored at position $i-k$. It uses \Cref{lem:equality-check} (S\ref{sec:equality-check}) on the first coordinate to store in the residual stream a boolean $b^k_i = \mathbb{I}(\phi(\floor{i/m}) = \phi(\floor{(i-k)/m}))$, indicating whether the quotient stored at $i$ matches the quotient stored at $i - k$.

    In the \underline{fifth} layer, position $i$ attends with query $q_i = \langle \phi(\floor{i/m}), 1 \rangle$, key $k_j = \langle \phi(\floor{j/m}), b^1_j \rangle$, and value $v_j = 1 - b^2_j$. When the output $h^5_i$ of this layer is read through layer norm, it produces $\phi(h^5_i) = \phi(i \bmod m)$ as proved in \Cref{lem:div-layer5} (\S\ref{sec:division-correctness}).

    The \underline{sixth} layer attends with query $q_i = \phi(a_i)$, key $k_j = \phi(j)$, and value $v_j = \langle \floor{j / m}, \phi(j \bmod m) \rangle$ (from layers two and five) to compute $\langle \phi(\floor{a_i / m}), \phi(a_i \bmod m) \rangle$, which equals $\langle \phi(b_i), \phi(c_i) \rangle$.

    The \underline{seventh} and final layer cleans up any remaining intermediate values stored in the residual stream, setting them back to 0 as per \Cref{lem:equality-check}. This is possible because all values $v$ are of the form $\phi(x)$ or a boolean, which means adding $-\phi(v)$ to the residual stream will reset the corresponding cell to 0.
\end{proof}

Our division construction is somewhat similar to the modular counting construction from \citet{strobl2024transformers},
though the tools and underlying assumptions are different.
Specifically, their approach relies on nonstandard position embeddings whereas ours uses masked pre-norm.

\section{Log Depth Enables Recognizing Regular Languages}
\label{sec:reglangs}

Constant-depth transformers cannot recognize regular languages, a natural task closely related to state tracking \citep{liu2023transformers,merrill2024illusion}.
\citet[Theorem 1]{liu2023transformers}\footnote{\CR{\citet[Theorem 5.1]{saunshi2025reasoning} also give a log-depth transformer construction for the regular language recognition problem. It is, however, not fully uniform as positions are encoded with vectors of length $\O(\log n)$.}} show that a variant of log-depth transformers can recognize regular languages using an associative prefix-sum construction \citep[cf.][]{hillis1986data,BlellochTR90}.
\CR{However, it is, prima facie, unclear whether the \emph{fully uniform} model of transformers we study---where the parameters cannot change at all with $n$---can implement their construction for two key reasons:}
\begin{compactenum}
    \item \CR{To handle information routing, \citet[Page 44]{liu2023transformers} require parameters that are not fully uniform, meaning they can depend on the input length and depth.
    This leaves it unclear whether a \emph{single transformer} (with a fixed set of parameters) could solve the task across \emph{all} input lengths.
    In other words, their work leaves it unclear whether a single transformer could implement the approach in a way that generalizes to inputs of any length.}
    
    \item \CR{\citet{liu2023transformers} also make several simplifications to the transformer architecture: 
    they add non-standard positional embeddings and
    remove residual connections and layer-norm.
    While one could adapt their construction to handle residual connections,
    it is not clear how to do this while also making their construction uniform, which requires proper memory management of cells in the residual stream (\Cref{subsec:memory-management}).}
\end{compactenum}

Our result, using \Cref{lem:division}, addresses both of these weaknesses. It shows, for the first time, that a single transformer with \emph{fixed parameters} (w.r.t.~input length $n$) can recognize strings of \emph{any length}; moreover this transformer \CR{does not require specific positional encodings, allows for layer-norm (in fact leverages it), and allows for residual connections while remaining fully uniform.}

\begin{restatable}[Regular Language Recognition]{theorem}{reglangs} \label{thm:reg-langs}
    Let $L$ be a regular language over $\Sigma$ \CR{recognized by a (non-)deterministic finite automaton with states $Q$}.
    Let $\$ \not\in \Sigma$.
    Then there exists a causally masked $(0,8,9)$-universal transformer with
    \begin{compactitem}
        \item \CR{model dimension $m_\textnormal{NFA} = \O(\abs{Q}^2)$, or $m_\textnormal{DFA} = \O(\abs{Q} \log \abs{Q})$ if deterministic;}
        
        \item \CR{feedforward width $w_\textnormal{NFA} = \O\big(2^{\abs{Q}^4} \big)$, or $w_\textnormal{DFA} = \O\big(2^{\abs{Q}^2 \log^2 \abs{Q}} \big)$ if deterministic;}
    \end{compactitem}
    that, on any string $w \$$, recognizes whether $w \in L$ when unrolled to $\ceil{\log_2 \abs{w}}$ depth.
\end{restatable}

Proof in \Cref{proof:reglangs}.
\Cref{thm:reg-langs} reveals that running a transformer to $\Theta(\log n)$ depth on inputs of length $n$ unlocks new power compared to a fixed-depth transformer, assuming $\TC^0 \neq \NC^1$.
If we do not care that the construction is uniform across layers, we can simplify 8-layer block that determines activeness to 1 layer: we simply hardcode the layer index $\ell$ and use a single transformer layer to compute $i \mod \ell$. Thus, the non-uniform construction results in a shallower transformer family:

\begin{corollary}[Regular Language Recognition, Non-Uniform] \label{cor:reg-langs-nonuniform}
    Let $L$ be a regular language over $\Sigma$ and $\$ \not\in \Sigma$.
    There exists a family of causally masked transformers $\{T_n\}_{n=1}^\infty$ where $T_n$ has $4 \ceil{\log_2 n} + 5$ layers such that, on any string $w\$$ of length $n$, $T_n$ recognizes whether $w \in L$.
\end{corollary}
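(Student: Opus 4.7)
The plan is to adapt the construction of \Cref{thm:reg-langs} by replacing the single component whose uniform implementation dominated the per-level depth: the activeness update. The outer logic is identical --- we represent regular language recognition as multiplication of $n$ elements in a finite transition monoid and implement the standard balanced binary-tree reduction, so that at each level $\ell$ every ``active'' position $i = t \cdot 2^\ell - 1$ stores the cumulative monoid element $\delta^\ell_i$ for the window from $i - 2^\ell + 1$ to $i$, with the parallel aggregation into $\delta^\ell_\$$ proceeding as in the proof of \Cref{thm:reg-langs}.

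In \Cref{thm:reg-langs}, each level consumes $8$ layers, of which $7$ are spent inside \Cref{lem:modular-counting} computing $c_i \bmod 2$ to determine activeness at level $\ell+1$; this invocation is forced because the shared, universal parameters cannot depend on $\ell$. Here, the transformer $T_n$ is tailored to a particular $n$, so for every level $\ell$ I hardcode the modulus $2^{\ell+1}$ into the level's parameters and implement the test $i+1 \equiv 0 \pmod{2^{\ell+1}}$ as a direct lookup on $\pi(i)$ (read via masked pre-norm), which I fold into the same feedforward that already performs the finite monoid multiplication. The remaining per-level work --- producing $\phi(i-1)$ via \Cref{lem:offset}, attending to retrieve $\delta^\ell_{i-1}$, attending to retrieve the previous active element $\delta^\ell_{j^*}$, and updating $\delta^\ell_i$ and $\delta^\ell_\$$ using \Cref{lem:write-delete} --- is carried out exactly as in \Cref{thm:reg-langs} and uses $4$ layers.

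Each of the $\ceil{\log_2 n}$ levels thus contributes $4$ layers; one extra $4$-layer block is appended to mirror the ``additional step beyond $\ceil{\log n}$'' already needed in \Cref{thm:reg-langs}; and one final feedforward layer reads $\delta_\$$ and runs the finite lookup that decides whether this monoid element sends the initial state to an accepting state. This yields the claimed total of $4\ceil{\log_2 n} + 5$ layers.

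The main obstacle is justifying that the hardcoded divisibility test really fits inside an already-present feedforward rather than costing its own layer. The key observation is that $\alpha^{\ell+1}_i$ is not consumed until the next level, so it may be written at the very end of the current block; and since both the monoid multiplication and the divisibility test are finite lookups in the non-uniform setting (with $\ell$ burned into the parameters), they can share a single sufficiently wide feedforward reading the appropriate coordinates of the residual stream. Memory management --- zeroing old copies of $\delta^\ell_i$, intermediate scratch cells, and routing values before the next iteration --- is inherited directly from \Cref{thm:reg-langs} via \Cref{lem:write-delete}.
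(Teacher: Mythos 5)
Your proposal matches the paper's own argument: the paper likewise keeps the binary-tree monoid-multiplication skeleton of \Cref{thm:reg-langs} and, exploiting non-uniformity, collapses the 8-layer activeness block (which in the universal construction must invoke \Cref{lem:modular-counting}) into a single hardcoded $i \bmod 2^{\ell+1}$ computation per level, yielding 4 layers per level plus 5 final layers. The only cosmetic difference is that you fold the hardcoded divisibility test into an existing feedforward (which implicitly assumes the positional encoding exposes the needed residues), whereas the paper allots it one dedicated transformer layer absorbed into the same per-level count; both are correct to the same level of rigor.
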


These results can be extended beyond regular languages: if a $b$-layer transformer can perform some binary associative operation $\oplus : X \times X \to X$, then one can construct an $\Theta(b \log n)$ layer transformer that computes the iterated version on $n$ values, $x_1 \oplus x_2 \oplus \ldots \oplus x_n \in X$.
One example is \textbf{iterated matrix multiplication}. For matrices from a fixed set (e.g., $k \times k$ boolean matrices), \Cref{thm:reg-langs} already shows that this task can be performed. However, if the matrices are not from a fixed set (e.g., matrices over $\mathbb Z$ or $\mathbb Q$ or whose shape depends on $n$), then it is unclear whether log-depth transformers can solve the binary multiplication problem, and thus whether they can solve the iterated version.

\paragraph{Fixed Depth and Bounded Length Inputs.}
Interestingly, while \cref{thm:reg-langs,cor:reg-langs-nonuniform} are about log-depth transformers, they can be turned around to infer bounds on the input length up to which \emph{fixed depth} transformers (i.e., depth fixed w.r.t.\ input length) can recognize regular languages. Specifically, given any regular language $L$ and a fixed $d$, \cref{cor:reg-langs-nonuniform} implies that there exists a depth $d$ transformer that can recognize strings $w \in L$ as long as $4 \ceil{\log_2 \abs{w}} + 5 \leq d$,\footnote{The inequality holds since \Cref{cor:reg-langs-nonuniform} generalizes to show $T_n$ recognizes all strings of length $m \leq n$.} \upd{which is satisfied if $4 \; (1 + \log_2 \abs{w}) + 5 \leq d$, i.e., $\abs{w} \leq 2^{(d-9)/4}$:}
\begin{corollary}[Depth Scaling for Regular Language] \label{cor:reg-langs-nonuniform-fixed-depth}
    Let $L$ be a regular language over $\Sigma$ and $\$ \not\in \Sigma$. For any $d \in \mathbb{N}$, there exists a causally masked $d$-layer transformer that, on any string $w \$$ of length at most \upd{$2^{(d-9)/4} + 1$}, recognizes whether $w \in L$.
\end{corollary}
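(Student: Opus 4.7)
The plan is to derive the corollary directly from \Cref{cor:reg-langs-nonuniform} by an arithmetic rearrangement of its depth formula, combined with the observation (asserted in the footnote) that the family $\{T_n\}$ correctly handles not just strings of length exactly $n$ but all strings of length at most $n$. Given a depth budget $d$, I would set $n^* = \lfloor 2^{(d-5)/4} \rfloor$, so that $4 \lceil \log_2 n^* \rceil + 5 \leq d$. I then take the transformer $T_{n^*}$ from \Cref{cor:reg-langs-nonuniform} and, if it has strictly fewer than $d$ layers, pad it with $d - (4\lceil \log_2 n^* \rceil + 5)$ trivial layers (e.g., feedforward layers whose output projection is zero) at the end. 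The resulting $d$-layer causally masked transformer is the object whose existence we want to establish.

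For correctness on any input $w\$$ with $\abs{w\$} = \abs{w} + 1 \leq 2^{(d-5)/4} + 1$, i.e., $\abs{w} \leq n^*$, I would inspect the construction underlying \Cref{thm:reg-langs} and argue that after $\lceil \log_2 \abs{w} \rceil$ rounds of the binary-tree associative scan, the $\$$ token already stores the correct transition-monoid element for $w$, and no subsequent round disturbs this value. This is because once the entire prefix has been absorbed into $\$$, no remaining non-$\$$ token is still active to the left of $\$$, so the attention pattern at $\$$ retrieves $\vec 0$ and the feedforward update that multiplies monoid elements into $\delta^\ell_\$$ becomes a no-op. Hence running $\lceil \log_2 n^* \rceil \geq \lceil \log_2 \abs{w} \rceil$ rounds, followed by the final accept/reject lookup layer, still yields the correct classification for every $\abs{w} \leq n^*$.

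The main obstacle is really the bookkeeping verification of this ``monotonicity under extra iterations'' property: one has to check that each of the memory-management steps in the construction of \Cref{thm:reg-langs} (the activeness updates via \Cref{lem:modular-counting}, the counter resets $c_i$, and the aggregation at $\$$) behaves correctly \emph{and idempotently} once all non-$\$$ tokens have been deactivated, so that further repetitions of the $8$-layer block do not corrupt $\delta^\ell_\$$. Beyond this, nothing new is needed; the final step is the algebraic manipulation $4 \lceil \log_2 n \rceil + 5 \leq d \iff \lceil \log_2 n \rceil \leq (d-5)/4 \iff n \leq 2^{(d-5)/4}$, which yields the stated length bound $\abs{w\$} \leq 2^{(d-5)/4} + 1$.
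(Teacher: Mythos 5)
Your proposal takes essentially the same route as the paper: the paper derives this corollary directly from \Cref{cor:reg-langs-nonuniform} together with the footnoted observation that $T_n$ in fact recognizes all strings of length $m \leq n$, followed by the same rearrangement of $4\lceil\log_2 n\rceil + 5 \leq d$, and your extra verification that further iterations of the repeated block leave $\delta^\ell_\$$ undisturbed is simply an elaboration of that footnote. The only caveat is that your claimed equivalence $\lceil\log_2 n\rceil \leq (d-5)/4 \iff n \leq 2^{(d-5)/4}$ holds exactly only when $(d-5)/4$ is an integer (e.g.\ $d=12$, $n=3$ satisfies the right side but not the left), but this rounding slack is already present in the paper's own statement and derivation, so it is not a defect of your argument relative to the paper.
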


An analogous result holds for universal (i.e., shared parameter) transformers from \cref{thm:reg-langs}.

\section{Log Depth Enables Graph Connectivity}
\label{sec:graph-connectivity}

In the \textbf{graph connectivity problem} (also referred to as STCON or the \textbf{reachability problem}), the input is a graph $G$, along with a source vertex $s$ and a target vertex $t$. The task is to determine if $G$ has a path from $s$ to $t$. This is a core problem at the heart of many computational questions in areas as diverse as network security, routing and navigation, chip design, and---perhaps most commonly for language models---multi-step reasoning. This problem is known to be complete for the class of logspace Turing machines \citep{rheingold2008undirected,immerman1998descriptive}, which means that, under common complexity conjectures, it cannot be solved accurately by fixed-depth transformers, which can only solve problems in the smaller class $\TC^0$.
However, graph connectivity can be expressed by log-depth \emph{threshold} circuits \citep[$\TC^1$,][]{barrington2000lecture}, which opens up a natural question: \emph{Can log-depth transformers, which are in $\TC^1$, solve graph connectivity?}

\CR{\citet{sanford2024understanding} provide results showing that a non-uniform log-depth transformer with arbitrarily powerful feedforward nets can solve variants of graph connectivity. We prove that even a \emph{fully uniform} log-depth transformer can solve graph connectivity}
(proof sketch below, full proof in \cref{proof:stconn}):

\newcommand{\pad}{\square}

\begin{restatable}[Graph Connectivity]{theorem}{stconn} \label{thm:graph-connectivity}
    There exists a $(17,2,1)$-universal transformer $T$ with both causal and unmasked heads, \CR{fixed model dimension $m$, and fixed feedforward width $w$} that, when unrolled $\ceil{\log_2 n}$ times, solves connectivity on (directed or undirected) graphs over $n$ vertices: given the $n \times n$ adjacency matrix of a graph $G$, $n^3$ padding tokens, and $s, t \in \{1, \ldots n\}$ in unary, $T$ checks whether $G$ has a path from vertex $s$ to vertex $t$.
\end{restatable}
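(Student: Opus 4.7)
My plan is to reduce graph connectivity to iterated boolean matrix squaring. Let $A$ be the $n \times n$ adjacency matrix of $G$ and set $M_0 = I \vee A$ (entrywise boolean OR with the identity). Under boolean matrix multiplication, $M_{\ell+1} = M_\ell \cdot M_\ell$ satisfies $M_\ell[u][v] = 1$ iff there is a path of length at most $2^\ell$ from $u$ to $v$. After $L = \ceil{\log_2 n}$ squarings, $M_L$ is the reflexive-transitive closure of $A$, so source $s$ reaches target $t$ iff $M_L[s][t] = 1$. I would implement the $L$ squarings via the $r = 2$ repeated layers unrolled $\ceil{\log_2 n}$ times, use the $17$ initial layers for coordinate setup and the initialization of $M_0$, and use the single final layer to read off $M_L[s][t]$.

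\textbf{Setup (the $17$ initial layers).} The input has $n^2$ adjacency-matrix tokens, $n^3$ padding tokens, and $O(n)$ unary tokens for $s,t$. I would use repeated invocations of \Cref{lem:modular-counting} to write, at each position $p$, its block type together with its coordinates: for matrix-block positions, a pair $(u,v) \in [n]^2$ extracted by one division by $n$; for padding positions, a triple $(u,v,w) \in [n]^3$ extracted by three successive divisions by $n$ of the offset within the padding region. The same initial layers decode $s$ and $t$ from their unary representations into $\phi(s), \phi(t)$ stored globally, and initialize the cell $M_0[u][v] = A[u][v] \vee \mathbb{I}(u = v)$ at each matrix-block position. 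The large budget of $17$ is consumed primarily by the three cascaded division calls, each costing $7$ layers per \Cref{lem:modular-counting}, with a few extra layers for type tagging and initialization.

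\textbf{The squaring block ($r = 2$ layers, repeated $L$ times) and the final layer.} In Layer 1, at each padding position with coordinates $(u,v,w)$, two unmasked attention heads fetch $M_\ell[u][w]$ (from the matrix cell with coordinates $(u,w)$) and $M_\ell[w][v]$ (from $(w,v)$), after which a feedforward network computes the boolean AND $p_{uvw} = M_\ell[u][w] \wedge M_\ell[w][v]$ and writes it back to the padding position. In Layer 2, at each matrix-block position with coordinates $(u,v)$, an unmasked attention head whose query/key uniquely match the $n$ padding positions of type $(u,v,\ast)$ averages the corresponding $p_{uvw}$ values via averaging-hard attention; the resulting average is $0$ when all $p_{uvw} = 0$ and at least $1/n$ otherwise, so a feedforward layer can threshold it into the boolean $M_{\ell+1}[u][v] = \bigvee_w p_{uvw}$. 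Using the memory-management primitives of \Cref{subsec:memory-management} (in particular \Cref{lem:write-delete}), the same layer overwrites $M_\ell[u][v]$ with $M_{\ell+1}[u][v]$ and zeros out every $p_{uvw}$ and intermediate scratch cell so the block can be re-invoked with identical parameters. The single final layer then attends with query $\langle \phi(s), \phi(t) \rangle$ against keys $\langle \phi(u), \phi(v) \rangle$ at the matrix cells to retrieve $M_L[s][t]$ at the final position, which is mapped to ``accept''/``reject''.

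\textbf{Main obstacle.} The hardest part is the in-place update in Layer 2 combined with the shared-parameter constraint of the universal transformer: the block must simultaneously delete $M_\ell[u][v]$ and install $M_{\ell+1}[u][v]$ at the \emph{same} residual-stream cell, clear every auxiliary cell $p_{uvw}$, and leave no other debris, so that iteration $\ell+1$ runs the same parameters on a clean state. I would handle this by representing every mutable scalar in the $\psi/\phi$ format and staging each update as a write-then-delete pattern via \Cref{lem:write-delete}, as in the proof of \Cref{thm:reg-langs}. A secondary delicacy is ensuring Layer 2's aggregation attends to exactly the $n$ padding positions of type $(u,v,\ast)$ and to nothing else (not to the matrix block, not to the unary $s,t$ tokens, and not to padding of a different type); I would enforce this by concatenating a block-type indicator with $(u,v)$ in both query and key and relying on the sharpness of averaging-hard attention to force the softmax limit to average only over the intended $n$ cells. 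Once these two points are settled, the per-iteration cost of just $r = 2$ layers is essentially tight, and the logarithmic unrolling matches the $\ceil{\log_2 n}$ squarings needed for transitive closure.
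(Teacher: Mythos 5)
Your proposal is essentially the paper's own construction: boolean matrix squaring with $M_0 = I \vee A$ is exactly the paper's $B_\ell$/$C_\ell$ predicate scheme (per-$k$ ANDs stored at the $n^3$ padding positions, OR-by-averaging-and-thresholding back at the $n^2$ matrix positions, coordinates decoded via \Cref{lem:modular-counting}, in-place overwrites via the $\psi/\phi$ memory-management primitives, and a final $\langle \phi(s),\phi(t)\rangle$ lookup). The only slip is the initial-layer accounting: decoding the padding coordinates needs just two cascaded divisions (plus one offset layer), since three $7$-layer invocations of \Cref{lem:modular-counting} would exceed the $17$-layer budget, which is how the paper fits setup, coordinate decoding, and the $B_0$ initialization into $17$ layers.
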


\begin{proof}[Proof Sketch]
We will prove this for the more general case of a directed graph $G$ over $n$ vertices. Let $A \in \{0, 1\}^{n \times n}$ be $G$'s adjacency matrix.
The idea is to use the first $n^2$ tokens of the transformer to construct binary predicates $B_\ell(i, j)$ for $\ell \in \{0, 1, \ldots, \ceil{\log n}\}$ capturing whether $G$ has a path of length at most $2^\ell$ from $i$ to $j$. To this end, the transformer will use the $n^3$ padding tokens to also construct intermediate ternary predicates $C_\ell(i,k,j)$ for $\ell \in \{1, \ldots, \ceil{\log n}\}$ capturing whether $G$ has paths of length at most $2^{\ell-1}$ from $i$ to $k$ and from $k$ to $j$. These two series of predicates are computed from each other iteratively, \upd{as in standard algorithms for graph connectivity}:
\begin{align}
    B_0(i, j) & \iff A(i, j) \ \vee \ i = j \\
    C_{\ell+1}(i, k, j) & \iff B_\ell(i, k) \wedge B_\ell(k, j) \\
    B_{\ell+1}(i,j) & \iff \exists k \; \textrm{s.t.} \; C_{\ell+1}(i,k,j)
\end{align}


\upd{The crucial part is to construct a transformer that correctly} operationalizes the computation of predicates $B_\ell$ and $C_\ell$. The input to the transformer is the adjacency matrix $A$ represented using $n^2$ tokens from $\{0, 1\}$, followed by $n^3$ padding tokens $\pad$, and finally the source and target nodes $s, t \in \{1, \ldots, n\}$ represented in unary notation using special tokens $a$ and $b$:
\begin{align*}
    A_{1,1} \ldots A_{1,n} \ A_{2,1} \ldots A_{2,n} \ \ldots \ldots \ A_{n,1} \ldots A_{n,n} \ 
    \underbrace{\pad \ldots \ldots \ldots \pad}_{n^3} \ \underbrace{a \ldots \ldots a}_{s} \ \underbrace{b \ldots \ldots b}_{t}
\end{align*}
Let $N = n^2 + n^3 + s + t$, the length of the input to the transformer. The first $n^2$ token positions will be used to compute predicates $B_\ell$, while the next $n^3$ token positions will be used for predicates $C_\ell$.

\underline{Initial Layers}.
The transformer starts off by using layer 1 to store $1/N, n, n^2, s,$ and $t$ in the residual stream at every position.
%
It then uses the next 15 layers to compute and store in the residual stream the semantic ``coordinates'' of each of the first $n^2 + n^3$ token position, namely, $(i, j)$ for each of the first $n^2$ positions $p = i n + j$ and $(i, k, j)$ for each of the next $n^3$ positions $p = n^2 + (i n^2 + k n + j)$.
Finally, layer 17 of the transformer computes the predicate $B_0(i,j)$ at the first $n^2$ token positions.

\underline{Repeated Layers}.
The repeated layers alternate between computing the $C_\ell$ and the $B_\ell$ predicates for $\ell \in \{1, \ldots, \ceil{\log n}\}$. The idea is to compute $C_\ell(i,k,j)$ in the $n^3$ padding tokens by attending to positions $(i,k)$ and $(k,j)$ and retrieving $B_{\ell-1}(i,k)$ and $B_{\ell-1}(k,j)$. Similarly, $B_\ell(i,j)$ is computed in the $n^2$ input positions via uniform attention over padding positions $(i,k',j)$ that store $C_\ell(i,k',j)$.

\underline{Final Layers}.
Finally, in layer $2 \ceil{\log n} + 18$, the final token uses a head that attends with query $\langle \phi(s), \phi(t) \rangle$ corresponding to the source and target nodes $s$ and $t$ mentioned in the input, attending solely to the position with coordinates $(s,t)$, and retrieving the final value $B_{\ceil{\log n}}(s,t)$.
\end{proof}

Thus, while $\NC^1$ circuits (which have log depth) cannot solve graph connectivity unless $\NC^1 = \mathsf{NL}$, log-depth transformers can.

\paragraph{Fixed Depth and Bounded Length Inputs.}
As for regular languages, this result also provides a concrete input length bound up to which a fixed-depth transformer can solve this problem, \upd{namely when $18 + 2 \ceil{\log_2 n} \leq d$, which is satisfied if $18 + 2 \; (1 + \log_2 n) \leq d$, i.e., $n \leq 2^{(d-20)/2}$}:

\begin{corollary}[Depth Scaling for Graph Connectivity] \label{cor:graph-connectivity-fixed-depth}
    For any $d \in \mathbb{N}$, there exists a $d$-layer transformer with both causal and unmasked heads that, on any graph with at most \upd{$2^{(d-20)/2}$} vertices, solves the connectivity problem.
\end{corollary}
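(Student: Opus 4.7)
The plan is to derive this corollary directly from \Cref{thm:graph-connectivity} by unrolling the universal transformer to a fixed number of times and observing that the result is a special (parameter-shared) case of a standard fixed-depth transformer. Concretely, \Cref{thm:graph-connectivity} gives a $(17,2,1)$-universal transformer $T$ such that, when unrolled $\ceil{\log_2 n}$ times on an input encoding an $n$-vertex graph, $T$ correctly decides connectivity. The total number of layers in this unrolled model is $17 + 2\ceil{\log_2 n} + 1 = 18 + 2\ceil{\log_2 n}$.

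First I would fix $d \in \mathbb{N}$ and set $n = \floor{2^{(d-18)/2}}$; if $n \leq 1$ the statement is vacuous, so assume $n \geq 2$. For inputs over at most $n$ vertices, the required unrolled depth is $18 + 2\ceil{\log_2 n} \leq 18 + 2 \cdot (d-18)/2 = d$, so the universal transformer of \Cref{thm:graph-connectivity} unrolled $\ceil{\log_2 n}$ times fits within $d$ layers. If needed, I would pad with $d - 18 - 2\ceil{\log_2 n}$ trivial identity layers (e.g.\ feedforward layers whose output projection is the zero matrix) to reach exactly $d$ layers without changing behavior.

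Next I would observe that an $(s,r,t)$-universal transformer unrolled to a fixed depth independent of the input is a standard transformer, since the only difference from the general definition in \Cref{def:transformer} is that some layers share parameters—a uniform restriction, but still a valid fixed-depth transformer. The attention-masking modes (both causal and unmasked heads) are inherited directly from $T$. Thus the unrolled model is a genuine $d$-layer transformer of the required type that correctly decides connectivity on graphs with at most $n \leq 2^{(d-18)/2}$ vertices.

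There is really no substantive obstacle here: the argument is purely arithmetic bookkeeping on the layer count from \Cref{thm:graph-connectivity}, plus the conceptual point (already emphasized in the paper's introduction and \Cref{subsec:memory-management}) that unrolling a looped transformer to a fixed depth yields a legitimate fixed-depth transformer. The only minor subtlety to be careful about is handling small $d$ (where $2^{(d-18)/2} < 1$ makes the claim vacuous) and ensuring that padding with trivial layers is compatible with the pre-norm and residual stream conventions of \Cref{def:transformer}, which it is since a zero update to the residual stream is realizable by either sublayer type.
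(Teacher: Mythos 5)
Your proposal is correct and follows essentially the same route as the paper, which states \Cref{cor:graph-connectivity-fixed-depth} without a separate proof precisely because it is the bookkeeping you describe: unroll the $(17,2,1)$-universal transformer of \Cref{thm:graph-connectivity} to $\ceil{\log_2 n}$ iterations, count $17 + 2\ceil{\log_2 n} + 1 \leq d$ layers, and observe that a fixed unrolling is a parameter-shared (hence legitimate) $d$-layer transformer inheriting the causal and unmasked heads. The one small wrinkle---that $\ceil{\log_2 n}$ can exceed $(d-18)/2$ when $d$ is odd and $n$ is not a power of two, costing at most one extra layer---is a rounding slack present in the paper's own statement as well, so it does not separate your argument from theirs.
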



\section{Comparing Scaling Depth to Scaling Width or Chain of Thought}
\label{sec:scaling}

\upd{Our results show that looping layers enables transformers to solve problems likely \CR{(conditionally under the conjecture that $\TC^0 \neq \NC^1$)} outside $\TC^0$.
We now consider how looping compares in expressive power to other ways to add computation to transformers.
Rather than increasing depth by repeating layers, one can increase a transformer's \emph{width} via a larger model dimension (\Cref{def:transformer}) or padding tokens \citep{pfau2024lets}.
}
Whereas slightly increasing depth likely expands expressive power beyond $\TC^0$, we show that achieving expressivity beyond $\TC^0$ via width likely requires \emph{superpolynomial} width, which is intractable.
In contrast to repeating layers, another way to extend inference-time computation is using chain-of-thought (CoT) steps.
We thus compare the expressive power achieved repeated layers with CoT steps.

\paragraph[Wide Transformers with Fixed Depth Remain in TC0]{Wide Transformers with Fixed Depth Remain in $\TC^0$.}
Our \Cref{cor:reg-langs-nonuniform-fixed-depth,cor:graph-connectivity-fixed-depth} show that minimally growing a transformer's depth allows it to express key problems that are likely outside $\TC^0$.
In contrast, \Cref{thm:polyn-width} \upd{(which extends \citet{merrill-sabharwal-2023-parallelism}; for completeness, \Cref{sec:proofs} gives a sketch)} shows that, if depth remains fixed, width must increase \emph{drastically} with sequence length to enable expressive power outside $\TC^0$.

\begin{restatable}[Width Scaling]{theorem}{widthScaling} \label{thm:polyn-width}
    Let $T$ be a fixed-depth transformer whose width (model dimension or padding tokens; \citealp{pfau2024lets}) grows at most polynomial in $n$ and whose weights on input length $n$ (to accommodate growing width) are computable in $\L$.
    Then $T$ can be simulated in $\L$-uniform $\TC^0$.
\end{restatable}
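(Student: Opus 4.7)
The plan is to generalize the standard simulation of fixed-depth, log-precision transformers in $\L$-uniform $\TC^0$ \citep{merrill-sabharwal-2023-parallelism} to the setting where the model dimension grows polynomially in $n$. The crux is that each sublayer still reduces to a bounded number of arithmetic primitives that $\TC^0$ can perform even when the operands have polynomially many entries, provided each entry is held to $\O(\log n)$ bits of precision.

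First, I would formalize the computation performed on input length $n$. With width $m = \mathrm{poly}(n)$, each attention sublayer reduces to: (i) a constant number of matrix-vector products where the matrix has $m \times m = \mathrm{poly}(n)$ entries and the vector has $m$ entries, all log-precision; (ii) $n^2$ inner products of $(m/h)$-dimensional vectors to form attention scores; (iii) a softmax over $n$ scores; (iv) an aggregation producing a weighted sum of $n$ value vectors. Each feedforward sublayer is similarly a constant number of $\mathrm{poly}(n)$-sized matrix-vector products composed with ReLU, and layer-norm requires one sum-of-squares plus one reciprocal square root on an $m$-dimensional vector. Each of these primitives on $\mathrm{poly}(n)$-many log-precision operands — iterated addition, iterated multiplication, division, approximate exponential, and approximate square root — is well-known to lie in $\TC^0$, uniformly so.

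Second, I would argue compositionality. Since the depth is a fixed constant $d$, the overall computation is a composition of $\O(d) = \O(1)$ such $\TC^0$ sublayer circuits, and $\TC^0$ is closed under composition of constantly many layers; hence the full forward pass lies in $\TC^0$. The precision can be kept at $\O(\log n)$ throughout because each sublayer's output entries are sums of at most $\mathrm{poly}(n)$ products of $\O(\log n)$-bit numbers and so fit in $\O(\log n)$ bits (perhaps with a larger leading constant), and layer-norm explicitly rescales to unit norm before the next sublayer reads values via masked pre-norm.

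Finally, I would handle $\L$-uniformity. By hypothesis, the weights on input length $n$ are computable in $\L$, so given indices into any weight matrix the relevant entry can be produced in logarithmic space. The circuit family implementing the above primitives has a completely regular structure whose gate connections depend only on $n$ and the fixed $(s,r,t,h,d)$ parameters, so its description can be generated in $\L$; plugging in the $\L$-computable weight entries as constants keeps the whole family $\L$-uniform. I expect the main obstacle to be the bookkeeping around precision: carefully showing that the $\mathrm{poly}(n)$-fan-in sums arising from polynomial width together with the softmax normalization and layer-norm rescaling do not require more than $\O(\log n)$ bits to represent faithfully, so that the standard $\TC^0$ arithmetic gadgets continue to apply at each of the constantly many layers.
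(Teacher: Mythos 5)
Your proposal is correct and takes essentially the same route as the paper: the paper's (brief) proof likewise hardcodes the $\L$-computable weights into an $\L$-uniform circuit family and then invokes the standard constant-depth threshold-circuit simulations of the self-attention and feedforward sublayers, observing that with polynomial width the circuit size remains polynomial (just a larger polynomial). Your write-up simply spells out the arithmetic primitives, composition, and uniformity bookkeeping that the paper delegates to the cited prior work.
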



Thus, to solve reasoning problems outside $\TC^0$ over a context length $n$, growing depth is much more efficient than growing width. Of course, there may be other types of problems (e.g., those that are knowledge intensive or very parallelizable) where growing width might be more important than growing depth.
\citet{petty-etal-2024-impact} provide an interesting empirical investigation of this choice on language modeling, semantic parsing, and other tasks.


\paragraph[Transformers with Log Chain-of-Thought Steps Remain in TC0]{Transformers with Logarithmic Chain-of-Thought Steps Remain in $\TC^0$.}
\citet[Theorem 4]{merrill-sabharwal-2024-cot} analyze the power of transformers with $\O(\log n)$ CoT steps, showing it is at most $\mathsf L$. However, we have shown that transformers with $\Theta(\log n)$ depth can solve directed graph connectivity, which is $\mathsf{NL}$-complete: this suggests growing depth has some power beyond growing CoT unless $\mathsf L = \mathsf{NL}$.
In fact, \upd{the $\O(\log n)$ CoT steps result can be strengthened \citep[Figure 10; for completeness, \Cref{sec:proofs} gives a sketch]{li2024chain} to an upper bound of} $\TC^0$:

\begin{restatable}[CoT Scaling]{theorem}{cot} \label{thm:cot-tc0}
    Transformers with $\O(\log n)$ chain-of-thought steps \upd{can only recognize languages in} $\L$-uniform $\TC^0$.
\end{restatable}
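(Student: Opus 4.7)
The plan is to strengthen the $\L$ upper bound of \citet{merrill-sabharwal-2024-cot} by observing that a chain of $\O(\log n)$ CoT tokens drawn from a fixed vocabulary ranges over only $n^{\O(1)}$ possible completions, so the inherently sequential ``decode a token, extend the context, decode again'' dynamic can be flattened into polynomially many independent $\TC^0$ verifications executed in parallel, rather than simulated step-by-step in logspace as in the prior bound.

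First I would fix the transformer $T$ and input $w$ of length $n$, and enumerate the candidate CoT sequences $c = c_1 \cdots c_{k(n)}$ with $k(n) = \O(\log n)$; each is indexable by an $\O(\log n)$-bit string, giving at most $|\Sigma|^{k(n)} = n^{\O(1)}$ candidates. Second, for every pair $(c, i)$ with $i \leq k(n)$, a $\TC^0$ sub-circuit would carry out a single forward pass of $T$ on the combined input $w c_1 \cdots c_{i-1}$ (of length $\O(n)$) and test whether its predicted next token equals $c_i$; each such simulation is available in $\L$-uniform $\TC^0$ by \citet{merrill-sabharwal-2023-parallelism}. Third, a constant-depth AND across the $k(n)$ position-wise checks yields a validity bit for $c$; since $T$'s greedy decoding is deterministic, exactly one candidate passes, and a threshold-based selection circuit reads off the final accept/reject flag of the unique valid candidate as the overall output.

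Because the entire construction is a polynomial-size parallel composition of $\L$-uniform $\TC^0$ sub-circuits combined by constant-depth threshold gates, the result remains in $\L$-uniform $\TC^0$; uniformity is preserved since enumerating candidates, locating prefix substrings inside $w c_1 \cdots c_{i-1}$, and wiring the verification grid are routine logspace tasks given that the transformer parameters themselves are fixed. The main obstacle will be making the ``unique consistent candidate'' argument fully rigorous in the presence of averaging-hard attention and argmax decoding: one must ensure that tie-breaking conventions in the simulated forward pass exactly match those of $T$, so that consistency of $c$ with $T$'s per-step predictions is both necessary and sufficient for $c$ to be the genuine CoT trace, and that the $\O(n)$-length concatenated contexts stay within the input-length regime for which the base $\TC^0$ simulation of fixed-depth transformers applies.
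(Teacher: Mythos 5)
Your proposal matches the paper's proof essentially exactly: enumerate the polynomially many $\O(\log n)$-length chains of thought, verify each in parallel with an $\L$-uniform $\TC^0$ simulation of the transformer's forward pass \citep{merrill-sabharwal-2023-parallelism}, and select the unique consistent chain's answer. The tie-breaking caveat you raise is a reasonable detail to nail down, but it does not change the argument, which the paper itself presents only as a sketch following \citet{li2024chain}.
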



Thus, while giving a model $\O(\log n)$ CoT steps does not increase its expressive power beyond $\TC^0$, our \Cref{thm:reg-langs,thm:graph-connectivity} allow $\Theta(\log n)$ to solve key problems that are (likely) outside $\TC^0$.
This demonstrates an advantage of dynamic depth over CoT as a form of inference-time compute for reasoning problems including regular language recognition and graph connectivity. It would be interesting to explore this comparison more generally for other problems.



\section{Experiments: Learning to Recognize Regular Languages}
\label{sec:experiments}

\upd{
Our theory characterizes the depth and width required to \emph{express} regular language recognition and graph connectivity.
Specifically, \Cref{thm:reg-langs} predicts that recognizing regular languages over strings of length $n$ is empirically possible with depth proportional to $\log n$.
On the other hand, \Cref{thm:polyn-width} predicts that the width would need to scale superpolynomially.
Here, we aim to empirically measure how much depth and width transformers require in practice when \emph{trained} to recognize regular languages.
We will find that expressibility and learnability are highly aligned here: transformers with log depth can learn to recognize regular languages, whereas width must increase superpolynomially with $n$.
Moreover, we can empirically quantify the constant factors in these relationships.
}

\begin{figure*}
    \centering
        \includegraphics[width=0.4\textwidth]{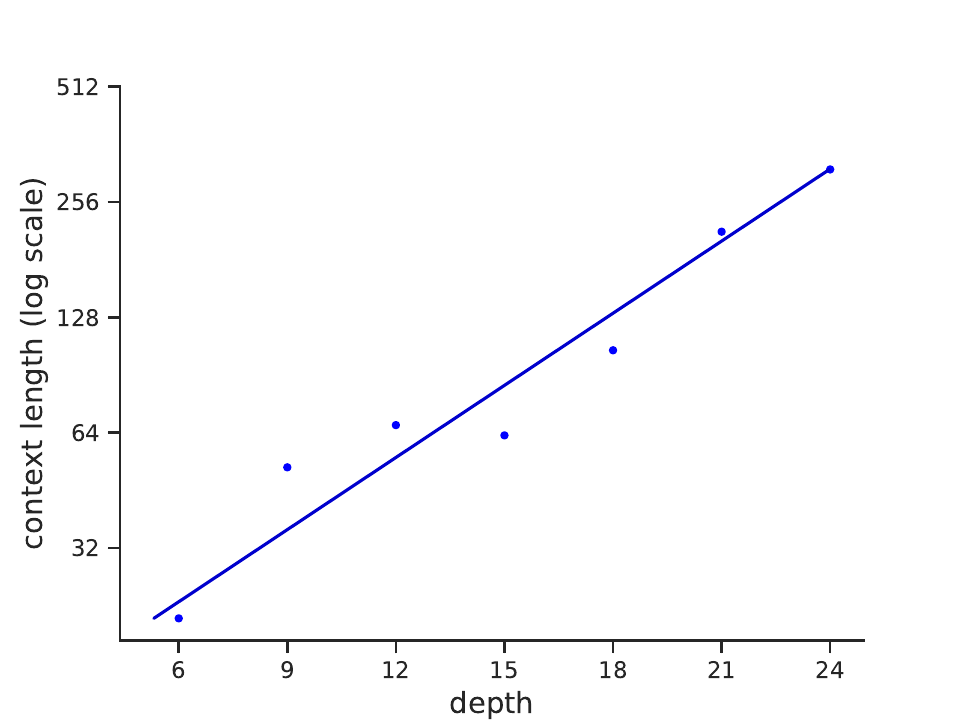}
    \hspace{2ex}
        \includegraphics[width=0.4\textwidth]{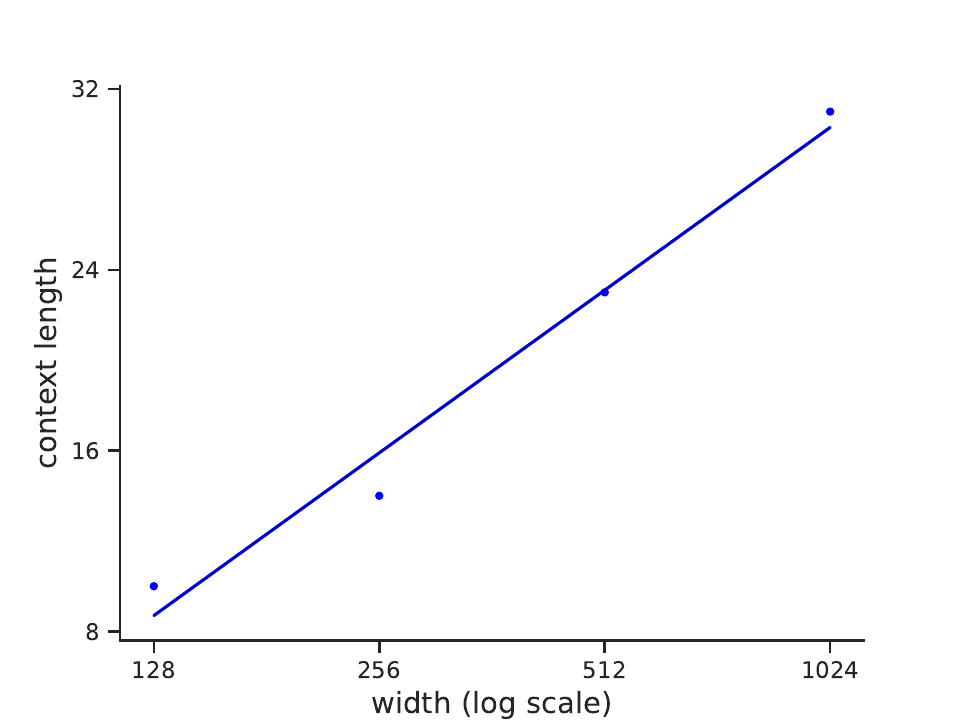}
    \caption{Strong linear fits imply theory/experiment match for modeling the impact of \textbf{depth} (left, $d = 4.8 \log_2 n - 15.8$ with $r^2 = 0.93$) and \textbf{width} (right, $n = 7.2 \log_2 w - 41.7$ with $r^2 = 0.98$) on effective context length for the $A_5$ state tracking task, a canonical hard regular language recognition problem. As predicted by \Cref{thm:reg-langs,thm:polyn-width}, to recognize strings of length $n$, depth only needs to increase minimally $\propto \log n$ while width must increase drastically as $\exp(\Theta(n))$.
    }
    \label{fig:depth-width-results}
\end{figure*}

We report on an extensive set of experiments to address these questions, training models of different depths and widths on the $A_5$ state tracking task \citep{merrill2024illusion}, which is a canonical testbed for hard regular language recognition (\Cref{thm:reg-langs}). The input to the task is a sequence of elements in $A_5$ (the group of even permutations over 5 elements), and the label at each token is the cumulative product of previous permutations up to and including that token (which is itself an element of $A_5$).

We train several (non-universal) transformers with the same architecture used by \citet{merrill2024illusion} on $~$100 million $A_5$ sequences of varying lengths up to 1024.
To understand the impact of depth and width in a controlled way, we train two series of transformers: the first with width fixed to 512 and depth varying in $\{6, 9, 12, 15, 18, 21, 24\}$, and the second with depth fixed to 6 and width varying in $\{128, 256, 512, 1024\}$.
See \Cref{sec:experimental-details} for further details about our training procedure.
After each model is trained, we measure accuracy at each token index from 1 to 1024 and define $n^*$ as the maximum token index at which the model achieved at least 95\% validation accuracy.
As we trained several seeds with the same depth and width, we aggregate these results across all models with the same depth and width by taking the best-performing (max $n^*$) model. We then plot $n^*$, which represents the effective context length up to which a model can solve the $A_5$ problem, as a function of either depth or width, holding the other variable fixed. We then evaluate if the predicted theoretical relationships between depth, width, and context length hold via an $r^2$ statistic.

The results are shown in \Cref{fig:depth-width-results}. When varying depth 
(left plot),
there is a very strong positive correlation ($r^2=0.93$) between model depth (x-axis) and $\log n^*$ (y-axis, log scale),
the effective (log) context length till which it can solve problems with high accuracy.
When varying width
(right plot)
there is an even stronger positive correlation ($r^2 = 0.98$) between log width (x-axis, log scale) and $n^*$ (y-axis). These results provide strong empirical support for our theoretical predictions that, to recognize regular languages over strings of length $n$, increasing depth logarithmically in $n$ will suffice (\Cref{thm:reg-langs}), but width must increase exponentially in $n$ (\Cref{thm:polyn-width}).
\Cref{fig:depth-width-results} also gives us a strongly predictive functional form to quantify the impact of scaling depth or width on the effective context length for regular language recognition. The empirical slope for the depth relationship is is 4.8 layers per log tokens. This is less than the slope of 8 derived for universal transformers in \Cref{thm:reg-langs}, but slightly greater than the theoretical coefficient of 4 for transformers whose depth grows non-uniformly with context length.
Thus, our transformers have learned a construction whose depth coefficient is comparable to what we showed was possible in theory, though perhaps slightly more wasteful than it needs to be.
Overall, these empirical results show that, in practice, the impact of depth and width on effective context length for regular language recognition aligns with our theoretical predictions, and we are able to empirically fit the quantitative coefficients in the relationships.


\section{Conclusion}

We have shown that recognizing regular languages and graph connectivity, two key problems inexpressible by fixed-depth transformers, become expressible if the depth of the transformer can grow \emph{very slightly} (logarithmically) with the context length by repeating layers.
This implies transformers with fixed depth $d$ \emph{can} solve these problems up to bounded context lengths of $2^{O(d)}$.
Further, we showed that scaling depth to solve these problems is more efficient than scaling width (which requires superpolynomial increase) or scaling chain-of-thought steps (which requires superlogarithmic increase).
As dynamic test-time compute methods have become popular for building more powerful reasoning models such as OpenAI o1 \citep{o1-short} and DeepSeek-R1 \citep{deepseekr1-short},
it would be interesting to explore whether universal transformers can realize this theoretical efficiency to provide more efficient long-context reasoning than chain-of-thought steps in practice.

While growing depth enables transformers to solve some key problems outside $\TC^0$, there are limitations on the types of problems log depth should enable solving.
Unless $\NC = \P$, log-depth (or even polylog-depth) transformers
cannot express $\P$-complete problems including solving linear equalities, in-context context-free language recognition,
circuit evaluation, and determining the satisfiability of Horn clauses \citep{greenlaw1991compendium}.
In future work, it would interesting to study the depth required for transformers to solve other interesting problems in $\NC$ including context-free recognition (generalizing regular languages; \Cref{thm:reg-langs}), which is in $\AC^1$ \citep{ruzzo-1981-uniform,venkateswaran-1991-properties} and boolean formula evaluation, which is $\NC^1$-complete \citep{buss-1987-bfvp}.
This would help us better understand the degree to which repeating layers can be used as a form of interence-time compute.





\section*{Limitations}

\CR{We have given constructions whereby looped transformers can express $\NC^1$-hard problems, though we have not considered looped transformers' inductive biases and learning dynamics, which are also important in practice beyond expressivity. \citet{saunshi2025reasoning} empirically study the inductive biases of looped transformers, suggesting looped transformers may generalize in ways favorable for reasoning tasks.}
\CR{Our experiments in \Cref{sec:experiments} were conducted with non-looped transformers. It would be interesting in future work to evaluate how the depth requirements change if we enforce that the transformer must re-use the same weights across layers.}

\bibliography{references}
\bibliographystyle{abbrvnat}

\clearpage
\appendix
\section{Building Blocks}
\label{sec:building-blocks}


\subsection{Residual Stream Storage Interface}
\label{sec:storage-interface}

Our masked pre-norm transformer architecture always normalizes values when reading them from the residual stream. This means that it's not always the case that what's added to the residual stream by one layer is accessible as-is in future layers, which can be problematic if there is a need to ``erase'' that value. We discuss how values are stored and, if needed, erased from the stream.

For any general scalar $z$, storing $z$ in the residual stream results in $\textrm{sgn}(z)$ being retrieved when masked pre-norm is applied to this cell. This will be useful when we want to collapse multiple values or perform equality or threshold checks. As a special case, when $z \in \{-1, 0, 1\}$, the retrieved value after masked pre-norm is precisely $z$. Thus scalars in $\{-1, 0, 1\}$ can be stored and retrieved without any information loss.

In order to retrieve a value $z$ with masked pre-norm (rather than just its sign), we can instead represent $z$ as a 4-dimensional vector $\psi(z) = \langle z, 1, -z, -1 \rangle$.
Then, pre-norm masked to only this vector will return $\phi(z) = 1/\sqrt{2} \cdot \psi(z) / \sqrt{z^2 + 1}$.
Scalars $z$ stored as $\psi(z)$ or $\phi(z)$ in the residual stream can be trivially retrieved as $\phi(z)$ by masked pre-norm:

\begin{lemma} \label{lem:read-phi}
    There exists a masked pre-norm $\nu$ such that, if $\phi(z)$ or $\psi(z)$ is stored in $\mathbf h$, $\nu(\mathbf h) = \phi(z)$.
\end{lemma}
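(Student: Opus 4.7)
The plan is straightforward. I would define $\nu$ to be the masked pre-norm whose mask vector $\mathbf{m}$ is the indicator of the four coordinates allocated to the $\psi$-representation of $z$. Coordinatewise multiplication by $\mathbf{m}$ zeroes out every coordinate of $\mathbf{h}$ except those four, so after masking we obtain a vector of the form $c \cdot \psi(z)$ embedded in the ambient $m$-dimensional space, where $c = 1$ if $\psi(z)$ is what was written and $c = 1/\sqrt{z^2+1}$ if $\phi(z)$ was written. This uses only the defining identity $\phi(z) = \psi(z)/\sqrt{z^2+1}$, which makes $\phi(z)$ a positive scalar multiple of $\psi(z)$.

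The key observation is that the subsequent layer-norm (or RMS-norm) step is invariant under positive scalar multiplication of its input: the mean of $\psi(z) = \langle z, 1, -z, -1 \rangle$ together with the zeroed-out coordinates is already $0$, and dividing by the standard deviation (or RMS) cancels the scalar $c$. Hence whether $\psi(z)$ or $\phi(z)$ was stored, the normalized output is the same. A direct calculation recovers $\phi(z)$ in the masked coordinates and $0$ elsewhere, up to a fixed multiplicative constant that depends only on the ambient dimension $m$ and the choice of normalization (e.g.\ a factor of $\sqrt{m/2}$ for standard layer-norm). Such a uniform constant is absorbed by the learned affine component of layer-norm, or equivalently folded into the downstream projection matrix.

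There is no real obstacle to the proof; the lemma is essentially a sanity check that values can be stored either in their normalized or unnormalized form and read out uniformly via masked pre-norm. The only bookkeeping step worth doing carefully is verifying that $\psi(z)$ has zero mean, so that layer-norm is genuinely scale-invariant on it, and then chasing the constant through the chosen normalization convention so that the retrieved vector matches $\phi(z)$ exactly (rather than up to scale). Once this is done the claim follows immediately, and the same construction yields the interface used throughout the rest of the building-blocks section.
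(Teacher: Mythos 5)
Your proof is correct and matches the paper's (implicit) argument: the paper treats this lemma as an immediate consequence of masking to the four $\psi$-coordinates and the scale-invariance of layer-norm on the zero-mean vector $\psi(z)$, with the residual scalar factor absorbed into the layer-norm multiplier or the next linear map, exactly as you describe (cf.\ the proof of \Cref{lem:storage-pair}). No gaps; your bookkeeping of the constant is the only nontrivial step and you handle it the same way the paper does.
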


Furthermore, a single masked pre-norm can even be used to retrieve multiple scalars stored in the residual stream. Since $\phi(z)$ is a unit-norm vector, this is a consequence of the following lemma:

\begin{lemma} \label{lem:storage-pair}
    There exists a masked pre-norm $\nu$ such that, if $\mathbf h$ stores unit-norm vectors $\phi_1, \ldots, \phi_k$, then $\nu(\mathbf h) = \langle \phi_1, \ldots \phi_k \rangle$.
\end{lemma}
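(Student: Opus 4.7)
The plan is to construct the masked pre-norm $\nu$ by designing the mask to isolate the block of residual-stream coordinates that store $\phi_1, \ldots, \phi_k$, and then exploiting the fact that the concatenation of unit-norm vectors has a \emph{data-independent} norm so that the normalization step can be undone exactly by the learnable gain of the pre-norm.

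First, I would choose the mask $\mathbf m \in \mathbb Q^m$ so that it equals $1$ on exactly the coordinates of $\mathbf h$ that store $\phi_1, \ldots, \phi_k$ (in order) and equals $0$ elsewhere. Then the masked vector $\mathbf v = \mathbf m \odot \mathbf h$ agrees with $\langle \phi_1, \ldots, \phi_k \rangle$ on the selected coordinates and vanishes on the remaining ones. The key observation is that since each $\phi_j$ is unit-norm, $\|\mathbf v\|^2 = \sum_{j=1}^k \|\phi_j\|^2 = k$, independent of the particular vectors being stored.

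Consequently, the denominator used by RMS-norm (or by standard layer-norm, provided the $\phi_j$ are zero-mean as with the paper's $\phi(z) = \psi(z)/\sqrt{z^2+1}$ encoding) is a fixed constant $c = c(k,m)$ depending only on $k$ and the ambient dimension $m$, not on the stored data. Normalization therefore produces $\mathbf v / c$, a deterministic scalar multiple of the concatenation. I would then set the learnable per-coordinate gain of the pre-norm to be $c$ on the selected coordinates (and leave the bias at $0$), so that the output recovers $\langle \phi_1, \ldots, \phi_k \rangle$ exactly on those coordinates. Reading these coordinates via downstream projections gives the claimed output, matching the implicit convention already used in \Cref{lem:read-phi}.

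The main technical obstacle is bookkeeping the difference between RMS-norm and standard layer-norm. With RMS-norm only $\|\mathbf v\|$ enters the denominator, so the unit-norm hypothesis on the $\phi_j$ is directly sufficient. With standard layer-norm, centering by the mean introduces an additional term that is in general data-dependent, so one must verify that $\mathbf v$ has zero mean. This holds in the paper's setting because each $\phi(z) = \langle z,1,-z,-1\rangle/\sqrt{z^2+1}$ is zero-mean by construction, and any concatenation of zero-mean vectors padded with zeros also has zero mean. Once this is checked, the proof reduces to the straightforward calibration of the learnable gain described above.
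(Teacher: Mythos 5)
Your proposal is correct and follows essentially the same route as the paper: mask onto the coordinates storing $\phi_1,\ldots,\phi_k$, note that the unit-norm assumption makes the normalization denominator a data-independent constant, and cancel that constant via the layer-norm gain (or, equivalently, the next linear map). Your extra check that standard layer-norm's mean-centering is harmless because $\phi(z)$ is zero-mean (with RMS norm needing no such check) is a sound refinement the paper leaves implicit.
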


\begin{proof}
    We apply the mask to focus on the positions where $\phi_1, \ldots, \phi_k$ are stored. Then, the masked pre-norm outputs
    \begin{equation*}
        \frac{1}{\sqrt{2k}} \langle \phi_1, \ldots, \phi_k \rangle .
    \end{equation*}
    We can hardcode the scalar multiplier of layer-norm to remove the scalar factor, or, equivalently, bake it into the next linear transformation. Either way, we are able to retrieve the concatenation of $\phi_1, \ldots, \phi_k$ as input to the layer.
\end{proof}

The following establishes that we can compute numerical values $z$ with attention heads and make them accessible as $\phi(z)$ in later layers:

\begin{lemma} \label{lem:write-z}
    Let $z$ be a scalar computable by an attention head from residual stream $\mathbf h$.
    There exist two layers producing residual streams $\mathbf h', \mathbf h''$ such that
    \begin{compactenum}
        \item $\phi(z)$ can be read via masked pre-norm from $\mathbf h'$ or $\mathbf h''$.
        \item $\phi(z)$ is stored in $\mathbf h''$ at (formerly blank) indices $I$.
    \end{compactenum}
\end{lemma}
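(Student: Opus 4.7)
The plan is to write $\psi(z) = \langle z, 1, -z, -1 \rangle$ into four blank coordinates $I$ of the residual stream during Layer 1, and to choose Layer 2 so that it leaves these coordinates unmodified. By \Cref{lem:read-phi}, storing $\psi(z)$ at $I$ makes $\phi(z)$ readable via masked pre-norm, satisfying both conditions of the lemma for $\mathbf{h}'$ and $\mathbf{h}''$ simultaneously.

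Concretely, I would have Layer 1's attention sublayer use two heads. The first is the hypothesized head that computes $z$: by configuring the output projection $\mathbf{W}$ appropriately, its scalar output is routed to two coordinates of $I$ with signs $+1$ and $-1$, contributing $z$ and $-z$. The second head's job is to produce the constant $1$. I would exploit the fact that $\pi$ separates position $1$ from all other indices: define the query and key so that averaging-hard attention places all mass on position $1$, and set the value projection $\mathbf{V}$ to extract a fixed nonzero coordinate of $\pi(1)$ (masking out the token embedding), so that the head deterministically outputs a known scalar, rescaled to $1$. The output projection $\mathbf{W}$ then routes this $1$ to the remaining two coordinates of $I$ with signs $+1$ and $-1$. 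Taking Layer 1's feedforward sublayer to be the zero map, the combined update at $I$ is exactly $\psi(z)$, so $\mathbf{h}'$ contains $\psi(z)$ at $I$.

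For Layer 2, I would choose any layer whose sublayers contribute $0$ at the coordinates $I$ (for instance, by zeroing the rows of its output projections corresponding to $I$). Then $\mathbf{h}''$ agrees with $\mathbf{h}'$ on $I$, so $\phi(z)$ is still readable via \Cref{lem:read-phi} and is stored at the formerly blank indices $I$ as required.

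The main obstacle is manufacturing the constant $1$ inside an attention sublayer, since the attention update is always a linear combination of value projections of (pre-normed) residual stream vectors and admits no explicit bias term. This is resolved by the positional encoding's separating property: hard-attending to a single identifiable position exposes a fixed input vector whose linear projection can be tuned to any desired constant. Once this trick is in place, the remainder of the construction is a routine matter of wiring the output projection $\mathbf{W}$ to assemble the four components of $\psi(z)$ into the designated blank coordinates.
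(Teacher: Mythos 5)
Your construction establishes condition 1 but not condition 2, and the gap is exactly the point of the lemma. After your two layers, the value physically stored at the indices $I$ is $\psi(z) = \langle z, 1, -z, -1\rangle$, not the unit vector $\phi(z) = \psi(z)/\sqrt{z^2+1}$. You treat these as interchangeable because masked pre-norm reads both as $\phi(z)$, but the lemma explicitly demands that $\phi(z)$ itself be \emph{stored} at $I$ in $\mathbf h''$. This distinction is load-bearing in the paper's memory-management machinery: \Cref{lem:remove} can only clear a cell whose stored content is a unit-norm vector (a later layer reads $\phi$ and adds $-\phi$, yielding $\vec 0$). If $\psi(z)$ is left in the cell, a later layer that reads it obtains only $\phi(z)$ and cannot reconstruct $-\psi(z)$ to erase it, so the cell can never be reset --- which breaks the repeated-layer constructions that rely on this lemma.

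The paper's second layer is therefore not idle, as yours is: it reads $\phi(z)$ from $\mathbf h'$ via masked pre-norm (\Cref{lem:read-phi}), \emph{recomputes} $z$ using the hypothesized attention head applied to the still-intact subspace $\mathbf h$ of $\mathbf h'$ (this is why the hypothesis that $z$ is computable by a head from $\mathbf h$ is needed a second time), and adds $-\psi(z) + \phi(z)$ at $I$, converting the stored content from the unnormalized to the normalized form. Your side remarks --- manufacturing the constant $1$ by hard-attending to position $1$ and routing components through the output projection --- are fine and fill in details the paper elides, but you need to replace your no-op second layer with this renormalization step to actually prove the statement.
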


\begin{proof}
    The first layer computes $z$ and stores $\psi(z)$ at blank indices $I$ in the residual stream, producing $\mathbf h'$. Thus, the second layer can read $\phi(z)$ with masked pre-norm via \Cref{lem:read-phi} and can also recompute $z$ from $\mathbf h$, which is a subspace of $\mathbf h'$. At this point, it outputs $-\psi(z) + \phi(z)$ at indices $I$, which leads to $\mathbf h''$ storing $\phi(z)$ at $I$.
\end{proof}

\subsection{Clearing Stored Values}
\label{sec:storage-interface-clearing}

In the repeated layers of a universal transformer, we will need to overwrite the values stored at particular indices in the residual stream. That is, if $[\mathbf h]_I = \mathbf x$, it will be useful to produce $\mathbf h'$ such that $[\mathbf h']_I = \mathbf y$ instead.
The following lemmas will help implement constructions of this form.

\begin{lemma} \label{lem:remove}
    If a unit-norm vector $\phi$ is stored in $\mathbf h$ at $I$, there exists a feedforward sub-layer that removes $\phi$, i.e., produces $\mathbf h'$ such that $[\mathbf h']_i = \vec 0$.
\end{lemma}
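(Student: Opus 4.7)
The plan is to construct the feedforward sublayer so that its additive update $\delta$ equals $-\phi$ at the indices $I$ and $\vec 0$ elsewhere; then the residual connection $\mathbf h' = \mathbf h + \delta$ cancels the stored vector. Since feedforward sublayers start with masked pre-norm, the first step is to choose the mask $\mathbf m$ to pick out precisely the cells $I$ where $\phi$ is stored. Because $\phi$ is unit-norm by assumption, the masked pre-norm returns $\phi$ itself, up to a fixed global scalar that depends only on the dimension of $I$ and the choice of layer-norm vs.\ RMS norm; as in the proof of \Cref{lem:storage-pair}, this scalar can be absorbed into the subsequent linear map $\mathbf W$, so without loss of generality we may assume $\mathbf z = \phi$.

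Next I would implement negation through ReLU using the standard identity $x = \mathsf{ReLU}(x) - \mathsf{ReLU}(-x)$, which gives the componentwise equality $\phi = \mathsf{ReLU}(\phi) - \mathsf{ReLU}(-\phi)$. Concretely, set $\mathbf W$ so that $\mathbf W \mathbf z = \begin{bmatrix} \phi \\ -\phi \end{bmatrix} \in \mathbb Q^{2\abs{I}}$, i.e., $\mathbf W$ is the vertical stacking of the identity and negative-identity restricted to the coordinates pulled out by the mask. Then $\mathsf{ReLU}(\mathbf W \mathbf z)$ has entries $\mathsf{ReLU}(\phi)$ on top of $\mathsf{ReLU}(-\phi)$.

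The output projection $\mathbf U$ is then chosen to place $-\mathsf{ReLU}(\phi) + \mathsf{ReLU}(-\phi) = -\phi$ at the coordinates $I$ of the residual stream and zero at all other coordinates: that is, $\mathbf U$ has blocks $-\mathbf I$ and $+\mathbf I$ acting on the top and bottom halves of its input respectively, with the result embedded into the residual stream at the positions $I$, and zero rows elsewhere. This yields $\delta_i = -\phi$ at $I$ and $\vec 0$ outside $I$, so $[\mathbf h']_I = [\mathbf h]_I + \delta_I = \phi - \phi = \vec 0$, which is the desired conclusion.

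The only subtle point is the one already handled above, namely that masked pre-norm applied to a unit-norm stored vector returns that vector up to a constant that depends only on the choice of normalization and the size of $I$; since this constant is fixed and known, folding it into $\mathbf W$ poses no obstacle. Everything else is a direct construction of the weight matrices, and no attention heads are required.
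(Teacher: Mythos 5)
Your proposal is correct and takes essentially the same route as the paper: read $\phi$ via masked pre-norm (using unit-normness so normalization is harmless up to an absorbable constant) and have the feedforward sublayer add $-\phi$ at the indices $I$, cancelling the stored value through the residual connection. The paper states this in one line, while you additionally spell out the weight matrices and the $\mathsf{ReLU}(x)-\mathsf{ReLU}(-x)$ trick needed to realize negation through the ReLU nonlinearity, which is a faithful elaboration rather than a different argument.
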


\begin{proof}
    The layer reads $\phi$ via masked pre-norm and writes $-\phi$ to $\mathbf h$ at $I$, setting $[\mathbf h]_I = \phi - \phi = \vec 0$.
\end{proof}

Combining \Cref{lem:remove} with a parallel layer that stores some new value at $I$, we see that we can effectively \emph{overwrite} values at $I$ rather than just deleting them.

It is also possible to remove information that is not a unit-norm vector, although the construction is less direct.

\begin{lemma} \label{lem:write-delete}
    Let $\delta$ be the output of a transformer layer on $\mathbf h$, targeted to indices $I$ at which $\mathbf h$ is blank.
    Then there exists another transformer layer that resets the residual stream $\mathbf h' = \mathbf h + \delta$ to $\mathbf h$.
\end{lemma}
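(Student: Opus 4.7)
The plan is to exploit the fact that $\delta$ was produced deterministically from $\mathbf{h}$ by a transformer layer, and that $\mathbf{h}$ remains intact within $\mathbf{h}' = \mathbf{h} + \delta$ at all coordinates outside $I$. The new layer will simply recompute $\delta$ from $\mathbf{h}'$ and then add $-\delta$ to the $I$ coordinates, undoing the original write.

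The key observation is that the layer producing $\delta$ read its input via masked pre-norm $\mathbf{z}_i = \mathsf{layer\_norm}(\mathbf{m} \odot \mathbf{h}_i)$, and since $\mathbf{h}$ is blank at $I$, the original mask $\mathbf{m}$ cannot have selected any coordinate of $I$ (otherwise the read would be ill-defined or contribute zero, so we may assume without loss of generality that $\mathbf{m}$ has no support on $I$). Consequently, applying the same mask to $\mathbf{h}'$ yields $\mathbf{m} \odot \mathbf{h}'_i = \mathbf{m} \odot \mathbf{h}_i$, since $\delta$ lives entirely on $I$. Therefore the normalized input $\mathbf{z}_i$ is identical whether we read from $\mathbf{h}$ or from $\mathbf{h}'$.

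Given this, I would construct the undoing layer as an exact copy of the original layer (same mask, same attention or feedforward parameters, same target indices $I$), modified only by negating the final output projection: replace $\mathbf{W}$ by $-\mathbf{W}$ for a self-attention sublayer, or $\mathbf{U}$ by $-\mathbf{U}$ for a feedforward sublayer. By linearity of the output projection, this new layer produces exactly $-\delta_i$ at each position $i$. Writing $-\delta_i$ into the residual stream at indices $I$ of $\mathbf{h}'$ yields $\mathbf{h}' - \delta = \mathbf{h}$, as required.

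The only subtlety, and the main point that needs care in a full write-up, is establishing the invariant that $\mathbf{m}$ does not read from $I$. This follows from the premise that $\mathbf{h}$ is blank at $I$ prior to the original layer: the original computation of $\delta$ could not have relied on those blank coordinates, so we are free to take the original mask to vanish on $I$. With that invariant in hand, the recomputation is deterministic and the negated-projection trick completes the proof in a single additional layer.
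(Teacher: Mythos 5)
Your proposal is correct and is essentially the paper's own argument: the paper likewise uses a sign-flipped copy of the original layer that reads only the $\mathbf h$-subvector of $\mathbf h'$ (which is unchanged since $\delta$ lives on the blank indices $I$), so it outputs $-\delta$ and restores $\mathbf h'' = \mathbf h + \delta - \delta = \mathbf h$. Your explicit justification that the original mask can be taken to vanish on $I$ is a slightly more careful spelling-out of what the paper states tersely, but it is the same construction.
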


\begin{proof}
    The second layer is a copy of the initial layer that considers the subvector $\mathbf h$ of $\mathbf h'$ as its input and where all signs are flipped.
    Thus, it outputs $-\delta$, which guarantees that the final residual stream is $\mathbf h'' = \mathbf h + \delta - \delta = \mathbf h$.
\end{proof}

\subsection{Computing Position Offsets}
\label{sec:computing-position-offsets}

It will be useful to show how a transformer can compute the position index of the previous token.

\begin{lemma} \label{lem:offset}
    For a fixed value $k$, assume that at each position $i$, a transformer stores $\mathbbm{1}[i = 0]$ and $\mathbbm{1}[i < k]$ in the residual stream.
    Then, there exists a layer that adds $\phi(i - k)$ to the residual stream at token indices $i \geq k$.
\end{lemma}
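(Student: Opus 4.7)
My plan is to carry out the whole computation inside a single attention sublayer by combining two heads that use uniform averaging-hard attention, and then to store the result in the $\psi$-format so that masked pre-norm later decodes $\phi(i-k)$ automatically. Concretely, I would set both heads' queries and keys to identically zero so that every attention score is constant; under causal masking, each head at position $i$ then simply averages its value over $j=0,\ldots,i$.

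The key choice is in the value vectors. Head A uses the indicator $\mathbbm{1}[j=0]$ already present in the residual stream, yielding output $a_i = 1/(i+1)$. Head B uses the combination $v_j = (1 - \mathbbm{1}[j<k]) - \mathbbm{1}[j=0]$, which is also directly readable since both indicators are assumed in the stream, yielding output
\[
e_i \;=\; \frac{1}{i+1}\sum_{j=0}^{i} v_j \;=\; \frac{(i+1-k)-1}{i+1} \;=\; \frac{i-k}{i+1}
\]
whenever $i \geq k$. I then pick the per-head value projections together with the output matrix $\mathbf{W}$ so that Head A writes $\langle 0, a_i, 0, -a_i\rangle$ and Head B writes $\langle e_i, 0, -e_i, 0\rangle$ into one common block of four previously blank residual-stream cells. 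Their sum is exactly $\psi(e_i, a_i) = \langle e_i, a_i, -e_i, -a_i\rangle$. The feedforward sublayer of this layer can be set to zero, keeping the total cost to a single transformer layer.

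Correctness of the readout follows the $\psi/\phi$ conventions of \Cref{sec:building-blocks}: any future layer that applies masked pre-norm to those four cells recovers, up to a uniform scalar factor that can be absorbed into the next linear map (as in the proof of \Cref{lem:storage-pair}), the unit vector $\phi(e_i/a_i) = \phi\bigl((i-k)/(i+1)\cdot(i+1)\bigr) = \phi(i-k)$, which is exactly what the lemma asks us to add to the residual stream at positions $i \geq k$.

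The main obstacle, and essentially the only subtle step, is the off-by-one bookkeeping. A naive choice like $v_j = 1 - \mathbbm{1}[j<k]$ would give head output $(i+1-k)/(i+1)$ and hence, after the $\psi$ trick, the wrong offset $\phi(i+1-k)$. The role of the second assumed indicator $\mathbbm{1}[i=0]$ in the hypothesis is precisely to let me subtract it inside Head B's value, shifting the numerator from $i+1-k$ down to $i-k$ while keeping Head A's denominator at $i+1$ so that the two denominators cancel telescopically in the ratio $e_i/a_i$. Verifying that $e_i \geq 0$ whenever $i \geq k$ is immediate from the closed form and matches the exact regime in which the lemma makes a claim, so no ReLU clipping or case analysis beyond this is needed.
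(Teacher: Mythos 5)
Your construction is correct and takes essentially the same route as the paper's proof: two uniform attention heads, one using the $\mathbbm{1}[j=0]$ indicator to compute the reciprocal of the position and the other using the $\mathbbm{1}[j<k]$ indicator to compute $(i-k)$ times that same reciprocal, with the pair then stored in the $\psi/\phi$ format so that masked pre-norm recovers $\phi(i-k)$. The only differences are cosmetic --- your more explicit off-by-one bookkeeping and your choice to emit $\psi(e_i,a_i)$ directly from the attention output projection rather than via the feedforward sublayer.
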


\begin{proof}
    We construct two attention heads.
    The first head is uniform with value $j$ as $\mathbbm{1}[j = 0]$. Thus, the head computes $h_1 = 1/i$.
    The second head is uniform with value $j$ as $\mathbbm{1}[j \geq k]$, and thus computes $h_2 = (i-k)/i$.
    We then use a feedforward layer to compute:
    \begin{equation*}
        \phi(h_1, h_2) = \phi(\frac{i-k}{i}, \frac{1}{i}) = \phi(i - k) .
    \end{equation*}
    The resulting value is then written to the residual stream.
\end{proof}

The precondition that we can identify the initial token \citep[cf.][]{merrill-sabharwal-2024-cot} is easy to meet with any natural representation of position, including $1/i$ or $\phi(i)$, as we can simply compare the position representation against some constant.

We assume that the positional encodings used by the model allow detecting the initial token \citep{merrill-sabharwal-2024-cot}. One way to enable this would simply be to add a beginning-of-sequence token, although most position embeddings should also enable it directly.

\subsection{Equality Checks}
\label{sec:equality-check}

We show how to perform an equality check between two scalars and store the output as a boolean.

\begin{lemma} \label{lem:equality-check}
    Given two scalars $x, y$ computable by attention heads or stored in the residual stream,
    we can use a single transformer layer to write $\mathbbm{1}[x = y]$ in the residual stream.
    Furthermore, a second layer can be used to clear all intermediate values.
\end{lemma}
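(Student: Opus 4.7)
The plan is to exploit the averaging semantics of saturated-hard attention so that a single attention head outputs a two-valued signal $h \in \{1/2, 1\}$ directly encoding $\mathbbm{1}[x = y]$ (specifically, $h = 1 - \tfrac{1}{2}\mathbbm{1}[x=y]$), and then decode this into the $\{0,1\}$ indicator using the feedforward sublayer of the same transformer layer. This approach sidesteps the need for a precision-dependent ReLU threshold constant, which is important because a single construction should work uniformly across input lengths.

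Within the attention sublayer, I would construct one head whose query at the current position $i$ carries $\phi(x)$ in a dedicated ``content'' block (accessible via \Cref{lem:read-phi} if $x$ is stored, or routed from an attention head that computes $x$), whose key at position $j$ carries $\phi(y)$ in that block only at $j = i$, and whose position-indexed offsets (derived from $\pi$) add a constant $+1$ at the BOS position $j = 0$, add $0$ at $j = i$, and add a strongly negative constant $-M$ at all other $j$. By Cauchy--Schwarz applied to the unit vectors $\phi(x), \phi(y)$ together with the injectivity of $\phi$, the total score is $1$ at BOS, equals $\langle \phi(x), \phi(y) \rangle \leq 1$ at $i$ with equality iff $x = y$, and is strongly negative at other positions (since the content term is bounded in $[-1,1]$ and $M$ dominates). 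Assigning values $v_0 = 1$ and $v_i = 0$, saturated attention outputs $h = 1$ when $x \neq y$ (BOS strictly wins the argmax) and $h = 1/2$ when $x = y$ (BOS ties with $i$ and is averaged). A companion constant-attention head supplies the scalar offsets so that the combined output projection writes $\psi(2(1 - h))$ into four scratch cells, faithfully encoding the $\{0,1\}$-valued quantity $2(1-h)$.

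In the feedforward sublayer of the same layer, masked pre-norm reads back $\phi(2(1 - h))$ from these scratch cells. Because $2(1 - h) \in \{0, 1\}$, this takes one of only two fixed unit-vector values, which are linearly separable; a single linear projection followed by one ReLU threshold cleanly extracts $\mathbbm{1}[x = y]$ and the output projection $\mathbf U$ adds it to the designated target cell of the residual stream. This completes the ``single layer writes the indicator'' claim. For the second layer that clears intermediate values, I would invoke \Cref{lem:write-delete}: the scratch contribution produced by the first layer is the output of a well-defined transformer layer on the incoming residual stream, so an inverse layer can recompute and subtract it, resetting the four scratch cells to zero.

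The main obstacle I expect is engineering the position-based offsets in $\mathbf Q, \mathbf K$ so that the bilinear position term in the attention score evaluates to $+1$ at $j = 0$, to $0$ at $j = i$, and to $-M$ at all other $j$, uniformly in $i$ and $x, y$. This is achievable because $\mathbf Q$ has access to $\pi(i)$ (which \Cref{def:transformer} assumes separates $0$ from other indices) while $\mathbf K$ has access to $\pi(j)$, so the required pattern reduces to a bilinear position-pair form; picking $M$ larger than the maximum possible content contribution (which is $1$, since $\phi$ is unit-norm) then guarantees the correct argmax behavior in saturated attention regardless of $x, y$.
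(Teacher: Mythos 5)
Your proposal takes a genuinely different route from the paper, but it has a gap in the attention-score engineering that I do not think can be repaired as stated. First, the step ``the key at position $j$ carries $\phi(y)$ in that block only at $j=i$'' is not implementable: the key projection $\mathbf K$ is a single linear map applied identically at every position, so the key at $j$ carries whatever those residual-stream cells contain at position $j$ and cannot be made to depend on the query position $i$. Second, and more fundamentally, the positional offset pattern you require (exactly $+1$ at $j=0$, exactly $0$ at $j=i$, and at most $-M$ at every other $j$, uniformly over all $i$ and all input lengths $n$) is not realizable as a bilinear form $\pi(i)^\top A\,\pi(j)$ over the bounded-dimensional position representations available here: for the standard encodings ($1/i$, $\phi(i)$, etc.) the off-diagonal values at $j=i\pm1$ approach the diagonal value as $i$ grows, so there is no uniform gap $M$, and there is no way to calibrate the BOS score to sit \emph{exactly} at the tie point independently of $i$. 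Your correctness argument hinges on an exact two-way tie in saturated attention, which is far more delicate than the argmax-only retrievals used elsewhere in the paper (where a query exactly matches a unique key and nothing must be tuned to tie with it). A further loose end is that the content contribution to the score at BOS and at positions $j\notin\{0,i\}$ is whatever happens to be stored there, which you do not control.

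The paper avoids all of this by never using attention for the comparison, which is a purely local computation: the attention sublayer writes the raw scalar $x-y$ into one temporary cell; masked pre-norm applied to a single stored scalar returns its sign exactly, so the feedforward sublayer of the same layer reads $\sigma=\mathrm{sgn}(x-y)\in\{-1,0,1\}$ and computes $1-\mathrm{ReLU}(\sigma)-\mathrm{ReLU}(-\sigma)=\mathbbm{1}[x=y]$ exactly; the second layer then recomputes $y-x$ and adds it to the temporary cell to clear it. Note that your stated motivation---avoiding a precision-dependent ReLU threshold constant---does not apply to this route, since the sign extraction via layer-norm is exact. If you want to salvage your write-up, replace the attention gadget with this local comparison; your back half (encoding a $\{0,1\}$ value, decoding it with one ReLU, and clearing scratch cells via \Cref{lem:write-delete}) is sound.
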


\begin{proof}
    After computing $x, y$ in a self-attention layer, we write $x - y$ to a temporary cell in the residual stream.
    The feedforward sublayer reads $\sigma_1 = \mathrm{sgn}(x - y)$, computes $z = 1 - \mathrm{ReLU}(\sigma_1) - \mathrm{ReLU}(-\sigma_1)$, and writes $z$ to the residual stream.
    
    The next transformer layer then recomputes $y - x$ and adds it to the intermediate memory cell, which sets it back to 0. Thus, the output is correct and intermediate memory is cleared.
\end{proof}




\section{Correctness of Division Construction Attention Heads} \label{sec:division-correctness}

The proof of \Cref{lem:division} presents the full construction to implement division in a transformer. For space, we omitted a full proofs of correctness for attention heads in the construction, which we now present.
We expect some of these techniques could be reused, though we have not stated them as generally as the gadgets in \Cref{sec:building-blocks}.

\begin{lemma}[First Layer of \Cref{lem:division}] \label{lem:div-layer1}
    Let $h^1_i$ be an attention head computed with query $q_i = \phi(i/m)$ and keys/values $k_j = v_j = \phi(j)$.
    Let $e_i = \mathbb{I}(h_i^1 = \phi(i/m))$.
    Then $e_i = 1$ if and only if $i$ is a multiple of $m$.
    Furthermore, if $e_i = 1$, then $h^1_i = \phi(i/m)$.
\end{lemma}

\begin{proof}
   Suppose first that $i$ is a multiple of $m$. In this case, there exists a position $j^* \leq i$ such that $i = m j^*$, which means the query $q_i = \phi(i/m) = \phi(j^*)$ exactly matches the key $k_{j^*}$. The head will thus return $v_{j^*} = \phi(j^*) = \phi(i/m)$, representing precisely the quotient $i/m$. Further, the equality check will pass, making $e_i = 1$. The layer thus behaves as intended when $i$ is a multiple of $m$.
   
   On the other hand, when $i$ is \emph{not} a multiple of $m$, no such $j^*$ exists. The head will instead attend to some $j$ for which $i \neq m j$ and therefore $\phi(i/m) \neq \phi(j)$, making the subsequent equality check fail and setting $e_i = 0$, as intended.
\end{proof}

\begin{lemma}[Second Layer of \Cref{lem:division}] \label{lem:div-layer2}
Let $h^2_i$ be an attention head computed with $q_i = \langle 1, 1 \rangle$, key $k_j = \langle e_j, [\phi(j)]_0 \rangle$, and value $v_j = h^1_j$. Then $h^2_i = \phi(\floor{i/m})$ .
\end{lemma}

\begin{proof}
    By construction, $q_i \cdot k_j = e_j - [\phi(j)]_0$ where $[\phi(j)]_0 = j / \sqrt{2 j^2 + 2}$ is the first coordinate of $\phi(j)$. Note that $[\phi(j)]_0 \in [0,1)$ for positions $j \leq i$ and that it is monotonically increasing in $j$. It follows that the dot product is maximized at the largest $j \leq i$ such that $e_j = 1$, i.e., at the largest $j \leq i$ that is a multiple of $m$. This $j$ has the property that $\floor{i/m} = j/m$. Thus, the head at this layer attends solely to this $j$. Recall that the value $v_j$ at this position is $h^1_j = \phi(j/m) = \phi(\floor{i/m})$.
\end{proof}

\begin{lemma}[Fifth Layer of \Cref{lem:division}] \label{lem:div-layer5}
    Let $h^5_i$ be an attention head computed with query $q_i = \langle \phi(\floor{i/m}), 1 \rangle$, key $k_j = \langle \phi(\floor{j/m}), b^1_j \rangle$, and value $v_j = 1 - b^2_j$. Then $h^5_i = \phi(i \bmod m)$.
\end{lemma}

\begin{proof}
    The query-key product achieves its upper bound of 2 exactly when two conditions hold: $\floor{i/m} = \floor{j/m}$ and $b^1_j = 1$. Thus, the head attends from $i$ to all $j \leq i$ that store the same quotient as $i$ and also have $b^1_j = 1$. To make this clearer, let's write $i$ as $i = b' m + c'$ for some $c' < m$. The query-key dot product is then maximized precisely at the $c'$ positions $j$ in $\{b'm + 1, b'm + 2, \ldots, b'm + c'\}$, for all of which $\floor{j/m} = \floor{i/m} = b'$; note that $b'm$ is \emph{not} included in this list as $b^1_j = 0$ when $j = b'm$. Of these positions, only $b'm + 1$ has the property that the quotient there is \emph{not} the same as the quotient two position earlier, as captured by the value $v_j = 1 - b^2_j$. Thus, the value $v_j$ is $1$ among these positions only at $j = b'm + 1$, and 0 elsewhere.
    
    Assuming $m$ does not divide $i$, $c' > 0$ and the head attends uniformly at $c'$ positions, returning $1/c'$ as the head output. By construction, $c' = i - b' m = i \bmod m$. The layer adds the vector $\psi(1, 1/c')$ defined as $\langle 1, 1/c', -1, -1/c' \rangle$ to the residual stream at position $i$. This, when read in the next layer using masked pre-norm, will yield $\phi(1, 1/c') = \phi(c') = \phi(i \bmod m)$.
    
    On the other hand, if $m$ does divide $i$ (which can be checked with a separate, parallel head), we write $\psi(0)$ to the residual stream, which, when read by the next layer, will yield $\phi(0) = \phi(i \bmod m)$.
\end{proof}


\section{Regular Language Recognition Proof} \label{proof:reglangs}

\reglangs*

\begin{proof}
    Regular language recognition can be framed as multiplying a sequence of elements in the automaton's transition monoid \citep{myhill1957finite,therien1981classification}. It thus suffices to show how $n$ elements in a finite monoid can be multiplied with $\Theta(\log n)$ depth.
    We show how a log-depth universal transformer can implement the standard binary tree construction \citep{barrington1988finite,liu2023transformers,merrill2024illusion} where each level multiplies two items, meaning the overall depth is $\Theta(\log \abs{w})$.
    We will represent a tree over the input tokens within the transformer. Each level of the tree will take 8 transformer layers.
    We define a notion of active tokens: at level 0, all tokens are active, and, at level $\ell$, tokens at $t \cdot 2^\ell - 1$ for any $t$ will remain active, and all other tokens will be marked as inactive.
    As an invariant, active token $i = t \cdot 2^\ell - 1$ in level $\ell$ will store a unit-norm vector $\delta^\ell_i$ that represents the cumulative product of tokens from $i - 2^\ell + 1$ to $i$.

    We now proceed by induction over $\ell$, defining the behavior of non-\$ tokens at layers that make up level $\ell$.
    The current group element $\delta^\ell_i$ stored at active token $i$ is, by inductive assumption, the cumulative product from $i - 2^\ell + 1$ to $i$.
    Let $\alpha^\ell_i$ denote that token $i$ is active.
    By \Cref{lem:offset} we use a layer to store $i - 1$ at token $i$.
    The next layer attends with query $\phi(i - 1)$, key $\phi(j)$, and value $\delta^\ell_j$ to retrieve $\delta^\ell_{i-1}$, the group element stored at the previous token.
    Finally, another layer attends with query $\vec 1$, key $\langle \phi(j)_1, \alpha^\ell_i \rangle$, and value $\delta^\ell_{j-1}$ to retrieve the group element $\delta^\ell_{j^*}$ stored at the previous active token, which represents the cumulative product from $i - 2 \cdot 2^\ell + 1$ to $i - 2^\ell$.
    Next, we will use two layers to update $\delta^\ell_i \gets \delta^{\ell + 1}_i$
    and $\delta^\ell_j \gets \vec 0$, which is achieved as follows.
    First, we assert there exists a single feedforward layer that uses a table lookup to compute $\delta^\ell_{j^*}, \delta^\ell_i \mapsto d$ such that $d/\norm{d} = \delta^\ell_{j^*} \cdot \delta^\ell_i = \delta^{\ell + 1}_i$.
    Next, we invoke \Cref{lem:write-delete} to construct a layer that adds $d$ to an empty cell of the residual stream and then another layer that deletes it.
    This second layer can now read both $\delta^\ell_i, \delta^\ell_{j^*}$ and $\delta^{\ell+1}_i$ (from $d$) as input, and we modify it to add $\delta^{\ell+1}_i - \delta^\ell_i$ to $\delta^\ell_i$, changing its value to $\delta^{\ell+1}_i$. Similarly, we modify it to add $-\delta^\ell_{j^*}$ to $\delta^\ell_{j^*}$ to set it to 0.
    A feedforward network then subtracts $\delta^\ell_i$ from the residual stream and adds $\delta^\ell_i \cdot \delta^\ell_j$.
    This requires at most 4 layers.
    
    To determine activeness in layer $\ell + 1$, each token $i$ attends to its left to compute $c_i / i$, where $c_i$ is the prefix count of active tokens, inclusive of the current token. We then compute $\phi(c_i / i, 1/i) = \phi(c_i)$ and store $c_i$ it temporarily in the residual stream.
    At this point, we use \Cref{lem:division} to construct 7 layers that compute $c_i \bmod 2$ with no storage overhead.
    The current token is marked as active in layer $\ell + 1$ iff $c_i = 0 \mod 2$,
    which is equivalent to checking whether $i = t \cdot 2^\ell - 1$ for some $t$.
    In addition to updating the new activeness $\alpha^{\ell+1}_i$, we also persist store the previous activeness $\alpha^\ell_i$ in a separate cell of the residual stream and clear $c_i$.
    This requires at most 8 layers.

    Finally, we describe how to aggregate the cumulative product at the $\$$ token, which happens in parallel to the behavior at other tokens.
    Let $\delta^\ell_\$$ be a monoid element stored at $\$$ that is initialized to the identity and will be updated at each layer.
    Using the previously stored value $i - 1$, we can use a layer to compute and store $\alpha^\ell_{i-1}$ and $\alpha^{\ell + 1}_{i-1}$ at each $i$.
    A head then attends with query $\vec 1$, key $\langle \phi(j)_1, 10 \cdot \alpha^\ell_{i-1} \rangle$, and value $\langle (1 - \alpha^{\ell + 1}_{j-1}) \cdot \delta^{\ell+1}_{j-1} \rangle$. This retrieves a value from the previous active token $j$ at level $\ell$ that is $\delta^\ell_j$ if $j$ will become inactive at $\ell + 1$ and $\vec 0$ otherwise.
    Iff $\delta^\ell_j$ is retrieved, a feedforward network subtracts $\delta^\ell_\$$ from the residual stream and adds $\delta^\ell_j \cdot \delta^\ell_\$$.
    This guarantees that whenever a tree is deactivated, its cumulative product is incorporated into $\delta^\ell_\$$.
    Thus, after $\ell = \ceil{\log_2 \abs{w}} + 1$ levels, $\delta^\ell_\$$  will be the transition monoid element for $w$.
    We can use one additional layer to check whether this monoid element maps the initial state to an accepting state using a finite lookup table.
    Overall, this can be expressed with 8 layers repeated $\ceil{\log_2 \abs{w}}$ times and 9 final layers (to implement the additional step beyond $\ceil{\log n}$).

    \CR{Finally, we justify the model size $m$ and feedforward width $w$.
    To represent a nondeterministic monoid element $Q \to Q$, which is defined by a matrix in $\{0, 1\}^{\abs{Q} \times \abs{Q}}$, we must store $s_\text{NFA} = \abs{Q}^2$ bits.
    For a deterministic monoid element, we can reduce this to $s_\text{DFA} = \abs{Q} \log \abs{Q}$ by storing the index of the unique $1$ in each row of the matrix.
    The model size is then $m = \O(s)$, which gives:
    \begin{align*}
        m_\textnormal{NFA} & = \O(\abs{Q}^2)
        & m_\textnormal{DFA} & = \O(\abs{Q} \log \abs{Q}) .
    \end{align*}
    The feedforward network stores a lookup table enumerating all possible values of $s^2$ bits, returning $s$ bits in each case.
    This can be represented as long as $m = \O(s)$ and $w = \O(2^{s^2})$, which gives:
    \begin{align*}
        w_\textnormal{NFA} &= \O(2^{\abs{Q}^4})
        & m_\textnormal{DFA} &= \O(2^{\abs{Q}^2 \log^2 \abs{Q}}) .
    \end{align*}
    }
    This finishes the proof.
\end{proof}

\section{Graph Connectivity Proof} \label{proof:stconn}

\stconn*

\begin{proof}
We will prove this for directed graphs, as an undirected edge between two vertices can be equivalently represented as two directed edges between those vertices. Let $G$ be a directed graph over $n$ vertices. Let $A \in \{0, 1\}^{n \times n}$ be $G$'s adjacency matrix: for $i, j \in \{1, \ldots, n\}$, $A_{i,j}$ is $1$ if $G$ has an edge from $i$ to $j$, and $0$ otherwise.

The idea is to use the first $n^2$ tokens of the transformer to construct binary predicates $B_\ell(i, j)$ for $\ell \in \{0, 1, \ldots, \ceil{\log n}\}$ capturing whether $G$ has a path of length at most $2^\ell$ from $i$ to $j$. To this end, the transformer will use the $n^3$ padding tokens to also construct intermediate ternary predicates $C_\ell(i,k,j)$ for $\ell \in \{1, \ldots, \ceil{\log n}\}$ capturing whether $G$ has paths of length at most $2^{\ell-1}$ from $i$ to $k$ and from $k$ to $j$. These two series of predicates are computed from each other iteratively, as in standard algorithms for graph connectivity:
\begin{align}
    B_0(i, j) & \iff A(i, j) \ \vee \ i = j \\
    C_{\ell+1}(i, k, j) & \iff B_\ell(i, k) \wedge B_\ell(k, j) \\
    B_{\ell+1}(i,j) & \iff \exists k \; \textrm{s.t.} \; C_{\ell+1}(i,k,j)
\end{align}

We first argue that $B_{\ceil{\log n}}(i,j) = 1$ if and only if $G$ has a path from $i$ to $j$. Clearly, there is such a path if and only if there is a ``simple path'' of length at most $n$ from $i$ to $j$. To this end, we argue by induction over $\ell$ that $B_\ell(i,j) = 1$ if an only if $G$ has a path of length at most $2^\ell$ from $i$ to $j$. For the base case of $\ell = 0$, by construction, $B_0(i,j) = 1$ if and only if either $i = j$ (which we treat as a path of length $0$) or $A_{i,j} = 1$ (i.e., there is a direct edge from $i$ to $j$). Thus, $B_\ell(i,j) = 1$ if and only if there is a path of length at most $2^0 = 1$ from $i$ to $j$. Now suppose the claim holds for $B_\ell(i,j)$. By construction, $C_{\ell+1}(i,k,j) = 1$ if and only if $B_\ell(i,k) = B_\ell(k,j) = 1$, which by induction means there are paths of length at most $2^\ell$ from $i$ to $k$ and from $k$ to $j$, which in turn implies that there is a path of length at most $2 \cdot 2^\ell = 2^{\ell+1}$ from $i$ to $j$ (through $k$). Furthermore, note conversely that \emph{if} there is a path of length at most $2^{\ell+1}$ from $i$ to $j$, then there must exist a ``mid-point'' $k$ in this path such that there are paths of length at most $2^\ell$ from $i$ to $k$ and from $k$ to $j$, i.e., $C_{\ell+1}(i,k,j) = 1$ for \emph{some} $k$. This is precisely what the definition of $B_{\ell+1}(i,j)$ captures: it is $1$ if and only if there exists a $k$ such that $C_{\ell+1}(i,k,j) = 1$, which, as argued above, holds if and only if there is a path of length at most $2^{\ell+1}$ from $i$ to $j$. This completes the inductive step.

The crucial part is to construct a transformer that correctly operationalizes the computation of predicates $B_\ell$ and $C_\ell$. The input to the transformer is the adjacency matrix $A$ represented using $n^2$ tokens from $\{0, 1\}$, followed by $n^3$ padding tokens $\pad$, and finally the source and target nodes $s, t \in \{1, \ldots, n\}$ represented in unary notation using special tokens $a$ and $b$:
\begin{align*}
    A_{1,1} \ldots A_{1,n} \ A_{2,1} \ldots A_{2,n} \ \ldots \ldots \ A_{n,1} \ldots A_{n,n} \ 
    \underbrace{\pad \ldots \ldots \ldots \pad}_{n^3} \ \underbrace{a \ldots \ldots a}_{s} \ \underbrace{b \ldots \ldots b}_{t}
\end{align*}

Let $N = n^2 + n^3 + s + t$, the length of the input to the transformer. The first $n^2$ token positions will be used to compute predicates $B_\ell$, while the next $n^3$ token positions will be used for predicates $C_\ell$.

\paragraph{Initial Layers.}
The transformer starts off by using \underline{layer 1} to store $1/N, n, n^2, s,$ and $t$ in the residual stream at every position, as follows. The layer uses one head with uniform attention and with value $1$ only at the first token (recall that the position embedding is assumed to separate 1 from other positions). This head computes $1/N$ and the layer adds $\psi(1/N)$ to the residual stream. Note that the input tokens in the first set of $n^2$ positions, namely $0$ and $1$, are distinct from tokens in the rest of the input. The layer, at every position, uses a second head with uniform attention, and with value $1$ at tokens in $\{0, 1\}$ and value $0$ at all other tokens. This head computes $n^2/N$. The layer now adds $\psi(n^2/N, 1/N)$, where $\psi(a, b)$ is defined as the (unnormalized) vector $\langle a, b, -a, -b \rangle$. When these coordinates are later read from the residual stream via masked pre-norm, they will get normalized and one would obtain $\phi(n^2/N, 1/N) = \phi(n^2)$. Thus, future layers will have access to $\phi(n^2)$ through the residual stream. The layer similarly uses three additional heads to compute $n^3/N$, $s/N$, and $t/N$. From the latter two values, it computes $\psi(s/N, 1/N)$ and $\psi(t/N, 1/N)$ and adds them to the residual stream; as discussed above, these can be read in future layers as $\phi(s/N, 1/N) = \phi(s)$ and $\phi(t/N, 1/N) = \phi(t)$. Finally, the layer computes $\psi(n^3/N, n^2/N)$ and adds it to the residual stream. Again, this will be available to future layers as $\phi(n^3/N, n^2/N) = \phi(n)$.


The transformer uses the \underline{next 15 layers} to compute and store in the residual stream the semantic ``coordinates'' of each of the first $n^2 + n^3$ token position as follows. For each of the first $n^2$ positions $p = i n + j$ with $1 \leq p \leq n^2$, it uses \Cref{lem:division} (7 layers) with $a_i$ set to $p$ and $m$ set $n$ in order to add $\phi(i)$ and $\phi(j)$ to the residual stream at position $p$. In parallel, for each of the next $n^3$ positions $p = n^2 + (i n^2 + k n + j)$ with $n^2 + 1 \leq p \leq n^2 + n^3$, it uses \Cref{lem:division} with $a_i$ set to $p$ and $m$ set $n$ in order to add $\phi((i+1)n + k)$ and $\phi(j)$ to the residual stream. It then uses the lemma again (7 more layers), this time with $a_i$ set to $(i+1)n + k$ and $m$ again set to $n$, to add $\phi(i+1)$ and $\phi(k)$ to the residual stream. Lastly, it uses \Cref{lem:offset} applied to $\phi(i+1)$ to add $\phi(i)$ to the residual stream.

\underline{Layer 17} of the transformer computes the predicate $B_0(i,j)$ at the first $n^2$ token positions as follows. At position $p = i n + j$, it uses \Cref{lem:equality-check} to compute $\mathbb{I}(\phi(A(i,j) = \phi(1))$ and $\mathbb{I}(\phi(i) = \phi(j))$; note that $\phi(A(i,j))$, $\phi(i)$, and $\phi(j)$ are available in the residual stream at position $p$. It then uses a feedforward layer to output $1$ if both of these are $1$, and output $0$ otherwise. This is precisely the intended value of $B_0(i,j)$. The sublayer then adds $B_0(i,j)$ to the residual stream. The layer also adds to the residual stream the value $1$, which will be used to initialize the boolean that controls layer alternation in the repeated layers as discussed next.

\paragraph{Repeating Layers.}
The next set of layers alternates between computing the $C_\ell$ and the $B_\ell$ predicates for $\ell \in \{1, \ldots, \ceil{\log n}\}$. To implement this, each position $i$ at layer updates in the residual stream the value of a single boolean $r$ computed as follows. $r$ is initially set to $1$ at layer 8. Each repeating layer retrieves $r$ from the residual stream and adds $1-r$ to the same coordinate in the residual stream. The net effect is that the value of $r$ alternates between $1$ and $0$ at the repeating layers. The transformer uses this to alternate between the computation of the $C_\ell$ and the $B_\ell$ predicates.

For $\ell \in \{1, \ldots, \ceil{\log n}\}$, \underline{layer $(2\ell-1) + 8$} of the transformer computes the predicate $C_\ell(i,k,j)$ at the set of $n^3$ (padding) positions $p = n^2 + i n^2 + k n + j$, as follows. It uses two heads, one with query $\langle \phi(i), \phi(k) \rangle$ and the other with query $\langle \phi(k), \phi(j) \rangle$. The keys in the first $n^2$ positions $q = i' n + j'$ are set to $\langle \phi(i'), \phi(j') \rangle$, and the values are set to $B_{\ell-1}(i',j')$. The two heads thus attend solely to positions with coordinates $(i,k)$ and $(k,j)$, respectively, and retrieve boolean values $B_{\ell-1}(i,k)$ and $B_{\ell-1}(k,j)$, respectively, stored there in the previous layer. The layer then uses \Cref{lem:equality-check} to compute $\mathbb{I}(B_{\ell-1}(i,k) = 1)$ and $\mathbb{I}(B_{\ell-1}(k,j) = 1)$, and uses a feedforward layer to output $1$ if both of these checks pass, and output $0$ otherwise. This is precisely the intended value of $C_\ell(i,k,j)$. If $\ell > 1$, the layer replaces the value $C_{\ell-1}(i,k,j)$ stored previously in the residual stream with the new boolean value $C_\ell(i,k,j)$ by adding $C_\ell(i,k,j) - C_{\ell-1}(i,k,j)$ to the same coordinates of the residual stream. If $\ell = 1$, it simply adds $C_\ell(i,k,j)$ to the residual stream.

For $\ell \in \{1, \ldots, \ceil{\log n}\}$, \underline{layer $2\ell + 8$} computes the predicate $B_\ell(i,j)$ at the first $n^2$ positions $p = i n + j$, as follows. It uses a head with query $\langle \phi(i), \phi(j) \rangle$. The keys in the second set of $n^3$ positions $q = n^2 + i' n^2 + k' n + j'$ are set to $\langle \phi(i'), \phi(j') \rangle$ (recall that $\phi(i')$ and $\phi(j')$ are available in the residual stream at $q$) and the corresponding values are set to the boolean $C_\ell(i',k',j')$, stored previously in the residual stream. The head thus attends uniformly to the $n$ padding positions that have coordinates $(i,k',j)$ for various choices of $k'$. It computes the average of their values, which equals $h = \frac{1}{n} \sum_{k'=1}^n C_\ell(i,k',j)$ as well as $1/(2n)$ using an additional head. We observe that $h \geq 1/n$ if there \emph{exists} a $k'$ such that $C_\ell(i,k',j) = 1$, and $h = 0$ otherwise. These conditions correspond precisely to $B_\ell(i,j)$ being $1$ and $0$, respectively.
We compute $h - 1/(2n)$ and store it in the residual stream. Similar to the proof of \Cref{lem:equality-check}, the feedforward layer reads $\sigma = \mathrm{sgn}(h - 1/(2n))$, computes $z = (1 + \mathrm{ReLU}(\sigma)) / 2$, and writes $z$ to the residual stream. The value $z$ is precisely the desired $B_\ell(i,j)$ as $\sigma$ is $1$ when $h \geq 1/n$ and $0$ when $h = 0$. As in \Cref{lem:equality-check}, the intermediate value $h - 1/(2n)$ written to the residual stream can be recomputed and reset in the next layer.
As before, the transformer replaces the value $B_{\ell-1}(i,j)$ stored previously in the residual stream with the newly computed value $B_\ell(i,j)$ by adding $\psi(B_\ell(i,j) - B_{\ell-1}(i,j))$ to the stream at the same coordinates.

\paragraph{Final Layers.}
Finally, in \underline{layer $2 \ceil{\log n} + 18$}, the final token uses a head that attends with query $\langle \phi(s), \phi(t) \rangle$ corresponding to the source and target nodes $s$ and $t$ mentioned in the input; recall that $\phi(s)$ and $\phi(t)$ are available in the residual stream. The keys in the first $n^2$ positions $p = i n + j$ are, as before, set to $\langle \phi(i), \phi(j) \rangle$, and the values are set to $B_{\ceil{\log n}}(i,j)$ retrieved from the residual stream. The head thus attends solely to the position with coordinates $(s,t)$, and retrieves and outputs the value $B_{\ceil{\log n}}(s,t)$. This value, as argued earlier, is $1$ if and only if $G$ has a path from $s$ to $t$.
\end{proof}

\section{Proofs for Width Scaling and Chain of Thought Claims}
\label{sec:proofs}

\widthScaling*

\begin{proof}
    By assumption, we can construct an $\L$-uniform $\TC^0$ circuit family in which the transformer weights for sequence length $n$ are hardcoded as constants.
    Next, we can apply standard arguments \citep{merrill2022SatAttnTC0,merrill-sabharwal-2023-parallelism,merrill2023logic} to show that the self-attention and feedforward sublayers can both be simulated by constant-depth threshold circuits, and the size remains polynomial (though a larger polynomial).
    Thus, any function computable by a constant-depth, polynomial-width transformer is in $\L$-uniform $\TC^0$.
\end{proof}

\cot*

\begin{proof}
    The high-level idea is that a polynomial-size circuit can enumerate all possible $O(\log n)$-length chains of thought. Then, in parallel for each chain of thought, we construct a threshold circuit that simulates a transformer \citep{merrill-sabharwal-2023-parallelism} on the input concatenated with the chain of thought, outputting the transformer's next token.
    We then select the chain of thought in which all simulated outputs match the correct next token and output its final answer.
    The overall circuit has constant depth, polynomial size, and can be shown to be $\L$-uniform.
    Thus, any function computable by a transformer with $O(\log n)$ chain of thought is in $\TC^0$.
\end{proof}

\section{Experimental Details} \label{sec:experimental-details}

\paragraph{Curriculum Training.}
In early experiments, we found that learning from long $A_5$ sequences directly was infeasible for our transformer models.
We hypothesize this was because, unless earlier tokens are predicted correctly, later tokens contribute significant noise to the gradient.
In order to make the learning problem feasible, we follow a curriculum training process, first training on $A_5$ sequences of length 2, then length 4, and continuing up to some fixed maximum power $2^i$. We can then measure the maximum $n^* \leq 2^i$ such that the model achieves strong validation accuracy, as mentioned in \Cref{sec:experiments}.

\paragraph{Depth Experiments.}
All depth experiments used a fixed width of 512.
For historical reasons, we have slightly different numbers of runs for different experimental conditions, and some of the runs use different batch sizes (64 and 128).
We originally ran a single sweep of depths and widths with 5 runs for depths 6, 12, 18, and 24, each using a batch size of 64 and maximum depth of $2^i = 128$.
Seeking to clarify the trend between these original data points, we launched 3 additional runs at depths 9, 15, 18, and 21 using a batch size of 128, which anecdotally sped up training without harming final performance.
We also observed that the original depth 24 runs were at the ceiling $n^* = 128$, so we launched 3 additional depth-24 runs with a batch size of 128 and $2^i = 512$ (we also used this larger sequence length for all other runs in the second set). In total, this made:
\begin{compactitem}
    \item 5 runs at depths 6, 12, and 8;
    \item 3 runs at depths 9, 15, 18, and 21;
    \item 8 runs at depth 24.
\end{compactitem}

\paragraph{Width Experiments.}
All width experiments used a fixed depth of 6.
We launched 5 runs at widths 128, 258, 512, 1024 with the same hyperparameters, each using a batch size of 64 and $2^i = 128$.

\paragraph{Compute.}
Each training run was launched on a single GPU.
We estimate that, together, these experiments took about 1000 GPU hours.

\paragraph{License.}
The codebase of \citet{merrill2024illusion}, which we used for data generation, has MIT license.


\newpage
\section*{NeurIPS Paper Checklist}

\begin{enumerate}

\item {\bf Claims}
    \item[] Question: Do the main claims made in the abstract and introduction accurately reflect the paper's contributions and scope?
    \item[] Answer: \answerYes{} 
    \item[] Justification: \Cref{sec:reglangs,sec:graph-connectivity,sec:scaling,sec:experiments} provide theoretical and empirical support for all claims.
    \item[] Guidelines:
    \begin{itemize}
        \item The answer NA means that the abstract and introduction do not include the claims made in the paper.
        \item The abstract and/or introduction should clearly state the claims made, including the contributions made in the paper and important assumptions and limitations. A No or NA answer to this question will not be perceived well by the reviewers. 
        \item The claims made should match theoretical and experimental results, and reflect how much the results can be expected to generalize to other settings. 
        \item It is fine to include aspirational goals as motivation as long as it is clear that these goals are not attained by the paper. 
    \end{itemize}

\item {\bf Limitations}
    \item[] Question: Does the paper discuss the limitations of the work performed by the authors?
    \item[] Answer: \answerYes{} 
    \item[] Justification: Simplifying assumptions are discussed in \Cref{sec:universal-transformers}.
    \item[] Guidelines:
    \begin{itemize}
        \item The answer NA means that the paper has no limitation while the answer No means that the paper has limitations, but those are not discussed in the paper. 
        \item The authors are encouraged to create a separate "Limitations" section in their paper.
        \item The paper should point out any strong assumptions and how robust the results are to violations of these assumptions (e.g., independence assumptions, noiseless settings, model well-specification, asymptotic approximations only holding locally). The authors should reflect on how these assumptions might be violated in practice and what the implications would be.
        \item The authors should reflect on the scope of the claims made, e.g., if the approach was only tested on a few datasets or with a few runs. In general, empirical results often depend on implicit assumptions, which should be articulated.
        \item The authors should reflect on the factors that influence the performance of the approach. For example, a facial recognition algorithm may perform poorly when image resolution is low or images are taken in low lighting. Or a speech-to-text system might not be used reliably to provide closed captions for online lectures because it fails to handle technical jargon.
        \item The authors should discuss the computational efficiency of the proposed algorithms and how they scale with dataset size.
        \item If applicable, the authors should discuss possible limitations of their approach to address problems of privacy and fairness.
        \item While the authors might fear that complete honesty about limitations might be used by reviewers as grounds for rejection, a worse outcome might be that reviewers discover limitations that aren't acknowledged in the paper. The authors should use their best judgment and recognize that individual actions in favor of transparency play an important role in developing norms that preserve the integrity of the community. Reviewers will be specifically instructed to not penalize honesty concerning limitations.
    \end{itemize}

\item {\bf Theory Assumptions and Proofs}
    \item[] Question: For each theoretical result, does the paper provide the full set of assumptions and a complete (and correct) proof?
    \item[] Answer: \answerYes{} 
    \item[] Justification: All proofs are included either right next to the formal statements or in the appendix.
    \item[] Guidelines:
    \begin{itemize}
        \item The answer NA means that the paper does not include theoretical results. 
        \item All the theorems, formulas, and proofs in the paper should be numbered and cross-referenced.
        \item All assumptions should be clearly stated or referenced in the statement of any theorems.
        \item The proofs can either appear in the main paper or the supplemental material, but if they appear in the supplemental material, the authors are encouraged to provide a short proof sketch to provide intuition. 
        \item Inversely, any informal proof provided in the core of the paper should be complemented by formal proofs provided in appendix or supplemental material.
        \item Theorems and Lemmas that the proof relies upon should be properly referenced. 
    \end{itemize}

    \item {\bf Experimental Result Reproducibility}
    \item[] Question: Does the paper fully disclose all the information needed to reproduce the main experimental results of the paper to the extent that it affects the main claims and/or conclusions of the paper (regardless of whether the code and data are provided or not)?
    \item[] Answer: \answerYes{} 
    \item[] Justification: \Cref{sec:experiments,sec:experimental-details} provide all key details of the experiments.
    \item[] Guidelines:
    \begin{itemize}
        \item The answer NA means that the paper does not include experiments.
        \item If the paper includes experiments, a No answer to this question will not be perceived well by the reviewers: Making the paper reproducible is important, regardless of whether the code and data are provided or not.
        \item If the contribution is a dataset and/or model, the authors should describe the steps taken to make their results reproducible or verifiable. 
        \item Depending on the contribution, reproducibility can be accomplished in various ways. For example, if the contribution is a novel architecture, describing the architecture fully might suffice, or if the contribution is a specific model and empirical evaluation, it may be necessary to either make it possible for others to replicate the model with the same dataset, or provide access to the model. In general. releasing code and data is often one good way to accomplish this, but reproducibility can also be provided via detailed instructions for how to replicate the results, access to a hosted model (e.g., in the case of a large language model), releasing of a model checkpoint, or other means that are appropriate to the research performed.
        \item While NeurIPS does not require releasing code, the conference does require all submissions to provide some reasonable avenue for reproducibility, which may depend on the nature of the contribution. For example
        \begin{enumerate}
            \item If the contribution is primarily a new algorithm, the paper should make it clear how to reproduce that algorithm.
            \item If the contribution is primarily a new model architecture, the paper should describe the architecture clearly and fully.
            \item If the contribution is a new model (e.g., a large language model), then there should either be a way to access this model for reproducing the results or a way to reproduce the model (e.g., with an open-source dataset or instructions for how to construct the dataset).
            \item We recognize that reproducibility may be tricky in some cases, in which case authors are welcome to describe the particular way they provide for reproducibility. In the case of closed-source models, it may be that access to the model is limited in some way (e.g., to registered users), but it should be possible for other researchers to have some path to reproducing or verifying the results.
        \end{enumerate}
    \end{itemize}

\item {\bf Open access to data and code}
    \item[] Question: Does the paper provide open access to the data and code, with sufficient instructions to faithfully reproduce the main experimental results, as described in supplemental material?
    \item[] Answer: \answerYes{} 
    \item[] Justification: As promised, we have updated the camera-ready version to point to the repository with code and data.
    \item[] Guidelines:
    \begin{itemize}
        \item The answer NA means that paper does not include experiments requiring code.
        \item Please see the NeurIPS code and data submission guidelines (\url{https://nips.cc/public/guides/CodeSubmissionPolicy}) for more details.
        \item While we encourage the release of code and data, we understand that this might not be possible, so “No” is an acceptable answer. Papers cannot be rejected simply for not including code, unless this is central to the contribution (e.g., for a new open-source benchmark).
        \item The instructions should contain the exact command and environment needed to run to reproduce the results. See the NeurIPS code and data submission guidelines (\url{https://nips.cc/public/guides/CodeSubmissionPolicy}) for more details.
        \item The authors should provide instructions on data access and preparation, including how to access the raw data, preprocessed data, intermediate data, and generated data, etc.
        \item The authors should provide scripts to reproduce all experimental results for the new proposed method and baselines. If only a subset of experiments are reproducible, they should state which ones are omitted from the script and why.
        \item At submission time, to preserve anonymity, the authors should release anonymized versions (if applicable).
        \item Providing as much information as possible in supplemental material (appended to the paper) is recommended, but including URLs to data and code is permitted.
    \end{itemize}

\item {\bf Experimental Setting/Details}
    \item[] Question: Does the paper specify all the training and test details (e.g., data splits, hyperparameters, how they were chosen, type of optimizer, etc.) necessary to understand the results?
    \item[] Answer: \answerYes{} 
    \item[] Justification: \Cref{sec:experiments,sec:experimental-details} provide all key details of the experiments.
    \item[] Guidelines:
    \begin{itemize}
        \item The answer NA means that the paper does not include experiments.
        \item The experimental setting should be presented in the core of the paper to a level of detail that is necessary to appreciate the results and make sense of them.
        \item The full details can be provided either with the code, in appendix, or as supplemental material.
    \end{itemize}

\item {\bf Experiment Statistical Significance}
    \item[] Question: Does the paper report error bars suitably and correctly defined or other appropriate information about the statistical significance of the experiments?
    \item[] Answer: \answerYes{} 
    \item[] Justification: We include $r^2$ as a measure of linear fit.
    \item[] Guidelines:
    \begin{itemize}
        \item The answer NA means that the paper does not include experiments.
        \item The authors should answer "Yes" if the results are accompanied by error bars, confidence intervals, or statistical significance tests, at least for the experiments that support the main claims of the paper.
        \item The factors of variability that the error bars are capturing should be clearly stated (for example, train/test split, initialization, random drawing of some parameter, or overall run with given experimental conditions).
        \item The method for calculating the error bars should be explained (closed form formula, call to a library function, bootstrap, etc.)
        \item The assumptions made should be given (e.g., Normally distributed errors).
        \item It should be clear whether the error bar is the standard deviation or the standard error of the mean.
        \item It is OK to report 1-sigma error bars, but one should state it. The authors should preferably report a 2-sigma error bar than state that they have a 96\% CI, if the hypothesis of Normality of errors is not verified.
        \item For asymmetric distributions, the authors should be careful not to show in tables or figures symmetric error bars that would yield results that are out of range (e.g. negative error rates).
        \item If error bars are reported in tables or plots, The authors should explain in the text how they were calculated and reference the corresponding figures or tables in the text.
    \end{itemize}

\item {\bf Experiments Compute Resources}
    \item[] Question: For each experiment, does the paper provide sufficient information on the computer resources (type of compute workers, memory, time of execution) needed to reproduce the experiments?
    \item[] Answer: \answerYes{} 
    \item[] Justification: Provided in \Cref{sec:experimental-details}.
    \item[] Guidelines:
    \begin{itemize}
        \item The answer NA means that the paper does not include experiments.
        \item The paper should indicate the type of compute workers CPU or GPU, internal cluster, or cloud provider, including relevant memory and storage.
        \item The paper should provide the amount of compute required for each of the individual experimental runs as well as estimate the total compute. 
        \item The paper should disclose whether the full research project required more compute than the experiments reported in the paper (e.g., preliminary or failed experiments that didn't make it into the paper). 
    \end{itemize}
    
\item {\bf Code Of Ethics}
    \item[] Question: Does the research conducted in the paper conform, in every respect, with the NeurIPS Code of Ethics \url{https://neurips.cc/public/EthicsGuidelines}?
    \item[] Answer: \answerYes{} 
    \item[] Justification: No special circumstances.
    \item[] Guidelines:
    \begin{itemize}
        \item The answer NA means that the authors have not reviewed the NeurIPS Code of Ethics.
        \item If the authors answer No, they should explain the special circumstances that require a deviation from the Code of Ethics.
        \item The authors should make sure to preserve anonymity (e.g., if there is a special consideration due to laws or regulations in their jurisdiction).
    \end{itemize}

\item {\bf Broader Impacts}
    \item[] Question: Does the paper discuss both potential positive societal impacts and negative societal impacts of the work performed?
    \item[] Answer: \answerNA{} 
    \item[] Justification: The contribution of this work is foundational understanding of the computational power of transformers.
    \item[] Guidelines:
    \begin{itemize}
        \item The answer NA means that there is no societal impact of the work performed.
        \item If the authors answer NA or No, they should explain why their work has no societal impact or why the paper does not address societal impact.
        \item Examples of negative societal impacts include potential malicious or unintended uses (e.g., disinformation, generating fake profiles, surveillance), fairness considerations (e.g., deployment of technologies that could make decisions that unfairly impact specific groups), privacy considerations, and security considerations.
        \item The conference expects that many papers will be foundational research and not tied to particular applications, let alone deployments. However, if there is a direct path to any negative applications, the authors should point it out. For example, it is legitimate to point out that an improvement in the quality of generative models could be used to generate deepfakes for disinformation. On the other hand, it is not needed to point out that a generic algorithm for optimizing neural networks could enable people to train models that generate Deepfakes faster.
        \item The authors should consider possible harms that could arise when the technology is being used as intended and functioning correctly, harms that could arise when the technology is being used as intended but gives incorrect results, and harms following from (intentional or unintentional) misuse of the technology.
        \item If there are negative societal impacts, the authors could also discuss possible mitigation strategies (e.g., gated release of models, providing defenses in addition to attacks, mechanisms for monitoring misuse, mechanisms to monitor how a system learns from feedback over time, improving the efficiency and accessibility of ML).
    \end{itemize}
    
\item {\bf Safeguards}
    \item[] Question: Does the paper describe safeguards that have been put in place for responsible release of data or models that have a high risk for misuse (e.g., pretrained language models, image generators, or scraped datasets)?
    \item[] Answer: \answerNA{} 
    \item[] Justification: The paper poses no such risks.
    \item[] Guidelines:
    \begin{itemize}
        \item The answer NA means that the paper poses no such risks.
        \item Released models that have a high risk for misuse or dual-use should be released with necessary safeguards to allow for controlled use of the model, for example by requiring that users adhere to usage guidelines or restrictions to access the model or implementing safety filters. 
        \item Datasets that have been scraped from the Internet could pose safety risks. The authors should describe how they avoided releasing unsafe images.
        \item We recognize that providing effective safeguards is challenging, and many papers do not require this, but we encourage authors to take this into account and make a best faith effort.
    \end{itemize}

\item {\bf Licenses for existing assets}
    \item[] Question: Are the creators or original owners of assets (e.g., code, data, models), used in the paper, properly credited and are the license and terms of use explicitly mentioned and properly respected?
    \item[] Answer: \answerYes{} 
    \item[] Justification: Cited and acknowledged license for synthetic data codebase in \Cref{sec:experimental-details}.
    \item[] Guidelines:
    \begin{itemize}
        \item The answer NA means that the paper does not use existing assets.
        \item The authors should cite the original paper that produced the code package or dataset.
        \item The authors should state which version of the asset is used and, if possible, include a URL.
        \item The name of the license (e.g., CC-BY 4.0) should be included for each asset.
        \item For scraped data from a particular source (e.g., website), the copyright and terms of service of that source should be provided.
        \item If assets are released, the license, copyright information, and terms of use in the package should be provided. For popular datasets, \url{paperswithcode.com/datasets} has curated licenses for some datasets. Their licensing guide can help determine the license of a dataset.
        \item For existing datasets that are re-packaged, both the original license and the license of the derived asset (if it has changed) should be provided.
        \item If this information is not available online, the authors are encouraged to reach out to the asset's creators.
    \end{itemize}

\item {\bf New Assets}
    \item[] Question: Are new assets introduced in the paper well documented and is the documentation provided alongside the assets?
    \item[] Answer: \answerNA{} 
    \item[] Justification: The paper did not introduce any new assets.
    \item[] Guidelines:
    \begin{itemize}
        \item The answer NA means that the paper does not release new assets.
        \item Researchers should communicate the details of the dataset/code/model as part of their submissions via structured templates. This includes details about training, license, limitations, etc. 
        \item The paper should discuss whether and how consent was obtained from people whose asset is used.
        \item At submission time, remember to anonymize your assets (if applicable). You can either create an anonymized URL or include an anonymized zip file.
    \end{itemize}

\item {\bf Crowdsourcing and Research with Human Subjects}
    \item[] Question: For crowdsourcing experiments and research with human subjects, does the paper include the full text of instructions given to participants and screenshots, if applicable, as well as details about compensation (if any)? 
    \item[] Answer: \answerNA{} 
    \item[] Justification: Did not involve research with human subjects.
    \item[] Guidelines:
    \begin{itemize}
        \item The answer NA means that the paper does not involve crowdsourcing nor research with human subjects.
        \item Including this information in the supplemental material is fine, but if the main contribution of the paper involves human subjects, then as much detail as possible should be included in the main paper. 
        \item According to the NeurIPS Code of Ethics, workers involved in data collection, curation, or other labor should be paid at least the minimum wage in the country of the data collector. 
    \end{itemize}

\item {\bf Institutional Review Board (IRB) Approvals or Equivalent for Research with Human Subjects}
    \item[] Question: Does the paper describe potential risks incurred by study participants, whether such risks were disclosed to the subjects, and whether Institutional Review Board (IRB) approvals (or an equivalent approval/review based on the requirements of your country or institution) were obtained?
    \item[] Answer: \answerNA{} 
    \item[] Justification: Did not involve research with human subjects.
    \item[] Guidelines:
    \begin{itemize}
        \item The answer NA means that the paper does not involve crowdsourcing nor research with human subjects.
        \item Depending on the country in which research is conducted, IRB approval (or equivalent) may be required for any human subjects research. If you obtained IRB approval, you should clearly state this in the paper. 
        \item We recognize that the procedures for this may vary significantly between institutions and locations, and we expect authors to adhere to the NeurIPS Code of Ethics and the guidelines for their institution. 
        \item For initial submissions, do not include any information that would break anonymity (if applicable), such as the institution conducting the review.
    \end{itemize}

\end{enumerate}

\end{document}